\newif\ifisarxiv
\newcommand\mydots{\makebox[1em][c]{.\hfil.\hfil.}}
\def\S{{\mathbb{S}}}
\def\Sd{\mathscr{S}_{\!d}}
\newcommand{\dx}{D_{\!\cal X}} 
\newcommand{\dxk}{D_{\!\cal X}^k} 
\newcommand{\dk}{D^k} 
\newcommand{\dxy}{D}
\def\vskx{{\mathrm{VS}_{\!\dx}^k}}
\def\vsk{{\mathrm{VS}_{\!D}^k}}
\def\vskxm{{\mathrm{VS}_{\!\dx}^{k-1}}}
\def\vskm{{\mathrm{VS}_{\!D}^{k-1}}}
\def\vsdx{{\mathrm{VS}_{\!\dx}^d}}
\def\vsd{{\mathrm{VS}_{\!D}^d}}
\newcommand{\vs}[1]{{\mathrm{VS}_{\!D}^{#1}}}
\newcommand{\sigd}{\boldsymbol\Sigma_{\!\dx}}
\def\wds{\boldsymbol\w_{\!D}^*}
\def\kd{K_{\!\dx}}
\def\Vol{{\mathrm{VS}}}
\def\Lev{{\mathrm{Lev}}}
\newenvironment{proofof}[2]{\par\vspace{2mm}\noindent\textbf{Proof of {#1} {#2}}\ }{\hfill\BlackBox\\[2mm]}
\DeclareMathOperator{\sgn}{\textnormal{sgn}}
\def\Sigmab{\mathbf{\Sigma}}
\def\Sigmabh{\widehat{\Sigmab}}
\def\Sigmabt{\widetilde{\Sigmab}}
\def\xbt{\widetilde{\x}}
\def\Xt{\widetilde{X}}
\def\Nc{\mathcal{N}}
\newcommand{\BlackBox}{\rule{1.5ex}{1.5ex}}  
\DeclareMathOperator*{\argmin}{\mathop{\mathrm{argmin}}}
\def\x{\mathbf x}
\def\y{\mathbf y}
\def\yt{\widetilde{y}}
\def\a{\mathbf a}
\def\b{\mathbf b}
\def\w{\mathbf w}
\def\wbh{\widehat{\mathbf w}}
\def\zero{\mathbf 0}
\def\one{\mathbf 1}
\def\X{\mathbf X}
\def\B{\mathbf B}
\def\A{\mathbf A}
\def\C{\mathbf C}
\def\U{\mathbf U}
\def\V{\mathbf V}
\def\St{\widetilde{\S}}
\def\I{\mathbf I}
\def\A{\mathbf A}
\def\Xt{\widetilde{\mathbf{X}}}
\def\Ot{\widetilde{O}}
\def\E{\mathbb E}
\def\R{\mathbb R} 
\def\Pr{\mathrm{Pr}} 
\def\tr{\mathrm{tr}}
\newcommand{\defeq}{\stackrel{\textit{\tiny{def}}}{=}}
\let\origtop\top
\renewcommand\top{{\scriptscriptstyle{\origtop}}} 
\definecolor{silver}{cmyk}{0,0,0,0.3}
\definecolor{yellow}{cmyk}{0,0,0.9,0.0}
\definecolor{reddishyellow}{cmyk}{0,0.22,1.0,0.0}
\definecolor{black}{cmyk}{0,0,0.0,1.0}
\definecolor{darkYellow}{cmyk}{0.2,0.4,1.0,0}
\definecolor{orange}{cmyk}{0.0,0.7,0.9,0}
\definecolor{darkSilver}{cmyk}{0,0,0,0.1}
\definecolor{grey}{cmyk}{0,0,0,0.5}
\definecolor{darkgreen}{cmyk}{0.6,0,0.8,0}
\newcommand{\Red}[1]{{\color{red}  {#1}}}
\newcommand{\white}[1]{{\textcolor{white}{#1}}}
\newenvironment{proof}{\par\noindent{\bf Proof\ }}{\hfill\BlackBox\\[2mm]}
\newtheorem{theorem}{Theorem}
\newtheorem{condition}{Condition}
\newtheorem{lemma}[theorem]{Lemma}
\newtheorem{remark}[theorem]{Remark}
\newtheorem{corollary}[theorem]{Corollary}
\newtheorem{definition}{Definition}
\begin{document}
%
\runningtitle{Correcting the bias in least squares regression}

%

\twocolumn[

\aistatstitle{Correcting the bias in least squares regression\\
with volume-rescaled sampling}

\ifisarxiv
\aistatsauthor{
  Micha{\l } Derezi\'{n}ski
  \And
  Manfred K. Warmuth
  \And
Daniel Hsu
}
\aistatsaddress{\small
Foundations of Data Analysis Institute\\
\small University of California, Berkeley\\
\small  \texttt{mderezin@berkeley.edu}
  \And
\small  Dept. of Computer Science\\
\small University of California, Santa Cruz\\
\small\texttt{manfred@ucsc.edu}
  \And
\small Dept. of Computer Science\\
\small Columbia University, New York\\
\small\texttt{djhsu@cs.columbia.edu}
}
\else
\aistatsauthor{ Author 1 \And Author 2 \And Author 3}
\aistatsaddress{ Institution 1 \And  Institution 2 \And Institution 3 }
\fi
]

\begin{abstract}
Consider linear regression where the examples are generated
by an unknown distribution on $\R^d\times\R$.
Without any assumptions on the noise, the linear least squares solution
for any i.i.d.~sample will typically be biased w.r.t.~the least squares
optimum over the entire distribution.  
However, we show that if an i.i.d.~sample of any
size $k$ is augmented by a certain small additional sample,
then the solution of the combined sample becomes unbiased.
We show this when the additional sample consists of $d$ points
drawn jointly according to the input distribution that is rescaled by the
squared volume spanned by the points. Furthermore, we propose
algorithms to sample from this volume-rescaled distribution when the
data distribution is only known through an i.i.d~sample.
\end{abstract}

\section{INTRODUCTION}
\label{s:introduction}
Unbiased estimators for linear regression are useful because
averaging such estimators 
gives an unbiased estimator whose prediction variance vanishes as the
number of averaged estimators increases. 
Such estimators might for example be produced in a distributed fashion from
multiple small samples. 
In this paper we develop a unique method for correcting the bias
of linear least squares estimators. Our main methodology 
for producing an unbiased estimator is
volume sampling. For a fixed design matrix $\X \in \R^{n\times d}$,
the most basic variant of volume sampling chooses a subset $S\subseteq
\{1..n\}$ of dimension many rows (i.e. $|S|=d$)
with probability proportional to the squared volume spanned by the
rows, i.e. $\det(\X_S)^2$, where $\X_S$ is the sub-matrix of rows
indexed by $S$. This procedure 
generalizes to sampling sets of any fixed size $k\geq d$
\cite{avron-boutsidis13}:
\begin{equation}
    \label{eq:vol}
P(S)\defeq \frac{\det(\X_S^\top\X_S)}{ \Red{n-d \choose k-d} \det(\X^\top\X)}
.
\end{equation}
Volume sampling has the property that for any design matrix
$\X$ with $n$ rows and any real response vector $\y\in\R^n$, the linear 
least squares solution for the subproblem $(\X_S,\y_S)$
is an unbiased estimator for the solution of the full problem
$(\X,\y)$ \cite{unbiased-estimates-journal}. 

We propose the following previously unobserved
alternate sampling method for size $k> d$ volume sampling:
First volume sample a set $S_\circ$ of size $d$ and then
pad the sample with a uniform subset $R$ of $k-d$
rows outside of $S_\circ$. Now the probability of the
combined size $k$ sample $S=S_\circ \cup R$ is again
volume sampling \eqref{eq:vol}:
$$
P(S)\!=\!\!\!\!
\sum_{\underset{|\hspace{-0.2mm}S_\circ\!|=d}{S_\circ\subseteq S}} \!\underbrace{P\big(R\!=\!S\!\setminus\! S_\circ\,|\,S_\circ\big)}
_{\Red{\frac{1} {{n-d \choose k-d}} }}
\!\!\!
\underbrace {P(S_\circ)} _ {\frac{\det(\X_{S_\circ}\!)^2}{\det(\X^\top\X)}}
\!\!\!=\! \frac{\det(\X_S^\top\X_S)}{ \Red{n-d \choose k-d} \!\det(\X^\top\X)},
$$
where the equality is the Cauchy-Binet formula for
determinants.
Furthermore, we study a more general statistical learning model where
the points come from an unknown probability
distribution over $\R^d\times \R$, and the goal is to recover the 
least squares solution w.r.t.~the distribution.
In this paper we generalize volume sampling to this case by rescaling
the i.i.d.~sampling distribution by the squared volume
of the sampled points.  

The simplest way to obtain a linear least squares estimator 
in the statistical learning model is to find the linear
least squares solution for a size $k$ i.i.d.~sample. 
Unfortunately such estimators are generally biased.
Note that this
is not the kind of bias that we deliberately impose with
regularization to reduce the variance of the estimator. Rather, due to the random design, the
least squares estimator is typically biased even when it is not
regularized at all~\cite{hsu2014random}, and we have limited control over how large that
bias may be (see Section~\ref{s:experiment} for a motivating
example).
However our alternate sampling procedure for volume sampling
(discussed in the previous paragraph) implies the following strategy
for correcting the bias:
We show that if an i.i.d.~sample of any size $k$ is 
augmented with a size $d$ volume-rescaled sample for this
distribution, then the combined sample is a volume-rescaled sample of size
$k+d$, and its linear least squares solution is an unbiased estimator
of the optimum.
In one dimension, this means that if 
an i.i.d.~sample is augmented with 
just one example, where this additional example is drawn from a distribution 
whose marginal distribution on $x$ is proportional to 
the original (unknown) marginal density times $x^2$,
then the resulting least squares estimator
becomes unbiased. 
Curiously enough, for the purpose of correcting the bias
it does not matter whether the size $d$ volume-rescaled
sample was generated before or after the original size
$k$ i.i.d. sample was drawn, since they are independent of each other.

In addition to generalizing volume sampling to the
continuous domain and showing that only a subsample of size
$d$ needs to be rescaled by the squared volume,
we study the time and sample complexity of volume-rescaled sampling
when the data distribution is only known through an i.i.d.~sample.
Specifically:
\vspace{-3mm}
\begin{enumerate}
\item We extend \emph{determinantal rejection sampling}
  \cite{leveraged-volume-sampling} to  arbitrary data distributions with
  bounded support, and our improved analysis
  reduces its time and sample complexity by a factor of $d$.
\item When the data distribution is Gaussian with unknown covariance, we
  propose a new algorithm with $O(d)$ sample complexity.
\end{enumerate}

\paragraph{Related work.}
Discrete volume sampling of size $k\leq d$ was introduced to computer
science literature by \cite{pca-volume-sampling}, with later algorithms by
\cite{efficient-volume-sampling,more-efficient-volume-sampling}. The
extension to sets of size $k>d$ is due to \cite{avron-boutsidis13},
with algorithms by
\cite{dual-volume-sampling,unbiased-estimates-journal,leveraged-volume-sampling},
and additional applications in experimental design explored by
\cite{tractable-experimental-design,proportional-volume-sampling,symmetric-polynomials}. Our alternate volume sampling procedure implies that the
algorithms by
\cite{efficient-volume-sampling,more-efficient-volume-sampling} can be
used to volume sample larger sets at no additional cost. The
unbiasedness of least squares estimators under volume sampling was
explored by
\cite{unbiased-estimates-journal,leveraged-volume-sampling}, drawing
on observations of \cite{bental-teboulle}.

For arbitrary data distributions, volume-rescaled sampling of
size $d$ is a special case of a determinantal point
process (DPP) (see, e.g. \cite{dpp-statistics,dpp-first-algorithm}). 
However for $k>d$ and arbitrary distributions, 
we are not aware of such sampling appearing in the literature. 
Related variants of discrete DPPs have
been extensively explored in the machine learning community
\cite{dpp,k-dpp,k-dpp-efficient,dpp-shopping,celis2016fair,celis2018fair}.

\paragraph{Notations and assumptions.}
Throughout the paper, $(\x,y)\in\R^d\times \R$ 
is a random example drawn from some distribution $D$. We
assume that the point $\x$ and the response $y$ both have
finite second moments, i.e. $\E[\|\x\|^2]\!<\!\infty$ 
and $\E[y^2]\!<\!\infty$. The marginal probability measure
of $\x$ is denoted as $\dx$, while $\dxk$ is the probability
measure over $(\R^d)^k$ of $k$ i.i.d.~samples
$(\x_1, \textnormal{\fontsize{6}{6}\selectfont
\dots},\x_k)$ drawn from $\dx$. We define
$\sigd\defeq\E[\x\x^\top]\in\R^{d\times d}$ and w.l.o.g. assume that
it is invertible. Given a data sample $\S\!=\! \{(\x_1,y_1),
\textnormal{\fontsize{6}{6}\selectfont \dots},(\x_k,y_k)\}$, we denote
the least squares estimators for $\S$ and $\dxy$,
respectively, as
\begin{align*}
  \w^*(\S) &\defeq \argmin_\w \!\!\sum_{(\x_i,y_i)\in \S}
    (\x_i^\top\w-y_i)^2\quad\text{and}\\
  \wds &\defeq \argmin_\w \,\E_D\big[(\x^\top\w-y)^2\big] =
    \sigd^{-1}\E_D[\x\,y].
\end{align*}

\subsection{Statistical Results}

Our results are centered around the following size $k$ {\em joint sampling
distribution}.
\begin{definition}\label{t:cauchy-binet}
    Given distribution $\dx$ and any $k\geq d$, we define
  volume-rescaled size $k$ sampling from $\dx$ as the
    following probability measure:
    For any event $A\subseteq (\R^d)^k$ measurable w.r.t. $\dxk$, its probability is
    \vspace{-1mm}
  \begin{align*}
    \vskx(A) \ &\defeq\  
	  \frac{\E_{\dxk}\Big[ \,\one_A\
               \overbrace{\det\!\Big(\sum\nolimits_{i=1}^k\!\x_i\x_i^\top\Big)}^
               {\textnormal{rescaling factor}}\Big]} 
	  {d! {k\choose d}\det\!\big(\sigd\big)},
  \end{align*}
  \vspace{-3mm}

  where $\one_A$ is the indicator variable of event $A$.
\end{definition} 
This distribution integrates to $1$ over its domain
$(\R^d)^k$ as a consequence of a continuous version of the classic
Cauchy-Binet formula, which has
appeared in the literature in various contexts (Lemma~\ref{l:determinant}).

Although we define volume-rescaled sampling for any sample size $k\geq d$,
we focus primarily on the special case of $k=d$
in the main results below. 
This is because we show that any $\vskx$
  can be decomposed into $\vsdx$ and $\dx^{k-d}$, the latter being the distribution of a size $k\!-\!d$ i.i.d.~sample from $\dx$.
\begin{theorem}\label{l:decomposition}
  Let $\S \sim \dx^{k-d}$ and $\S_\circ\sim\vsdx$. Let $\St\in\R^{k\times
    d}$ denote a random permutation of the points from $\S$
    concatenated with $\S_\circ$, i.e. $\St=\sigma(\langle
    \S,\S_\circ \rangle)$, where $\sigma$ is a
    random  permutation. Then $\St\sim\vskx$.
\end{theorem}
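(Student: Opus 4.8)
The plan is to prove the equality of laws by testing against functions. Since, by Definition~\ref{t:cauchy-binet}, $\vskx$ is absolutely continuous with respect to $\dxk$ with an explicit density, it suffices to show that $\E[f(\St)]=\E_{\vskx}[f]$ for every bounded measurable $f\colon(\R^d)^k\to\R$. By construction the law of $\St$ is exchangeable (the final uniform permutation symmetrizes it), as is $\vskx$ (its rescaling factor is symmetric in the points), so establishing this identity for all such $f$ is exactly what is needed.

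First I would unfold the construction. Conditioning on the permutation $\sigma$ and using the independence of $\S\sim\dx^{k-d}$ and $\S_\circ\sim\vsdx$,
\[
\E[f(\St)]=\frac{1}{k!}\sum_{\sigma\in S_k}\E_{\S,\,\S_\circ}\big[f\big(\sigma(\langle\S,\S_\circ\rangle)\big)\big].
\]
I would then remove $\vsdx$ in favour of the plain i.i.d.\ measure: since $k=d$ makes the normalizer in Definition~\ref{t:cauchy-binet} equal to $d!\det(\sigd)$, the block $\S_\circ$ contributes, relative to $\dx^d$, the Radon--Nikodym weight $\det(\sum\x_i\x_i^\top)/(d!\det(\sigd))$ over its $d$ points. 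Hence the joint expectation over $(\S,\S_\circ)$ becomes a single expectation over $(\x_1,\dots,\x_k)\sim\dxk$, weighted by the determinant of the last $d$ points.

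Next comes the combinatorial core. For each fixed $\sigma$ I would relabel the integration variables, using exchangeability of $\dxk$, so that $f$ is evaluated at $(\x_1,\dots,\x_k)$ while the determinantal weight is carried by the $d$-subset $T_\sigma=\sigma^{-1}(\{k-d+1,\dots,k\})$. Averaging over $\sigma$ makes $T_\sigma$ uniform over all $d$-element subsets of $[k]$, each arising from the fraction $d!(k-d)!/k!=1/\binom{k}{d}$ of permutations, giving
\[
\E[f(\St)]=\frac{1}{d!\binom{k}{d}\det(\sigd)}\,\E_{\dxk}\Big[f(\x_1,\dots,\x_k)\!\!\sum_{T\in\binom{[k]}{d}}\!\!\det\Big(\sum\nolimits_{j\in T}\x_j\x_j^\top\Big)\Big].
\]

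Finally I would apply the Cauchy--Binet formula (Lemma~\ref{l:determinant}) pointwise to the $k\times d$ matrix with rows $\x_1^\top,\dots,\x_k^\top$: each size-$d$ term is a squared $d\times d$ minor, and the sum of all squared minors equals $\det(\sum_{j=1}^k\x_j\x_j^\top)$. This collapses the bracket to exactly the rescaling factor of Definition~\ref{t:cauchy-binet}, yielding $\E[f(\St)]=\E_{\vskx}[f]$ and hence $\St\sim\vskx$. I expect the only delicate step to be the symmetrization bookkeeping---verifying that averaging the single fixed $d$-block over all permutations spreads the determinantal weight uniformly across all $d$-subsets while leaving $f$ at identity arguments---after which the sum-of-minors collapse via Cauchy--Binet is immediate and the permutation multiplicities cancel through $d!\binom{k}{d}=k!/(k-d)!$.
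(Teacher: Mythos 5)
Your proof is correct and follows essentially the same route as the paper's: symmetrize over the random permutation so that the determinantal weight carried by the $\vsdx$ block is spread uniformly over all $d$-element subsets of $[k]$, then collapse the resulting sum of $d\times d$ minors via Cauchy--Binet; you merely spell out the relabeling bookkeeping that the paper asserts in its first display. (One cosmetic slip: the pointwise sum-of-minors identity is the Cauchy--Binet formula itself, not Lemma~\ref{l:determinant}, which is its expectation analogue.)
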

Given the above decomposition, one may wonder what is the purpose of
defining volume-rescaled sampling for any size $k>d$. In fact, we
will see in the following sections that both in the proofs and in
algorithms it is sometimes easier to work with $\vskx$ rather
than its decomposed version. For example in the theorem below, we
show that for any $k$, the least squares estimator computed on a
volume-rescaled sample is unbiased.
Despite the fact that continuous determinantal point processes have
been studied extensively in the past, we were not able to find
this result for arbitrary $\dx$ in the
literature.
\begin{theorem}\label{t:unbiased}
    Consider the following distribution $\vsk$ on samples
    $\S=\{(\x_1,y_1),\dotsc,(\x_k,y_k)\}$
    of size $k$:
  \begin{align*}
&\textnormal{Sample}& &\x_{1},\dots,\x_{k}\ \sim\ \vskx,\\
      &\textnormal{Query}&&y_{i}\ \sim\ D_{{\cal Y}|\x=\x_{i}}\quad \forall_{i=1..k}.
\end{align*}
Then $\E_\vsk[\w^*(\S)] = \wds$.
\end{theorem}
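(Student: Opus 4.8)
The plan is to prove the claim one coordinate at a time, exploiting the fact that the volume rescaling factor defining $\vskx$ is exactly the determinant hiding in the denominator of the least squares solution, so that the two cancel and leave a \emph{polynomial} integrand that can be evaluated with linear algebra. Fix a coordinate $j\in\{1,\dots,d\}$ and write $w_j^*$ for the $j$-th entry of $\w^*(\S)$. Let $\X\in\R^{k\times d}$ be the matrix with rows $\x_i^\top$, set $\M=\X^\top\X=\sum_i\x_i\x_i^\top$ and $\b=\X^\top\y=\sum_i\x_i y_i$, so that $\w^*(\S)=\M^{-1}\b$ when $\M$ is invertible. First I would use the tower rule: given $\x_1,\dots,\x_k$ the estimator is linear in the responses, and $\E[y_i\mid\x_i]=f(\x_i)$ with $f(\x)\defeq\E[y\mid\x]$, so the conditional expectation replaces $\b$ by $\b_f\defeq\sum_i\x_i f(\x_i)$.

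Next, by Cramer's rule $\det(\M)\,w_j^*=\det(\M^{(j)})$, where $\M^{(j)}$ is $\M$ with its $j$-th column replaced by $\b_f$ (interpreting $\w^*$ via the pseudoinverse, both sides vanish when $\M$ is singular, since $\b_f\in\mathrm{range}(\X^\top)=\mathrm{range}(\M)$). Expanding the definition of $\vskx$, the rescaling determinant cancels the one in $w_j^*$, giving
\[
\E_{\vsk}[w_j^*]=\frac{\E_{\dxk}\!\big[\det(\M^{(j)})\big]}{d!\binom{k}{d}\det(\sigd)}.
\]
The crucial observation is that $\M^{(j)}=\X^\top\Z$, where $\Z\in\R^{k\times d}$ equals $\X$ with its $j$-th column replaced by $(f(\x_1),\dots,f(\x_k))^\top$. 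This identity is what lets me treat arbitrary $k$ uniformly rather than reducing through Theorem~\ref{l:decomposition}.

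Now I would apply the Cauchy–Binet formula $\det(\X^\top\Z)=\sum_{S\in\binom{[k]}{d}}\det(\X_S)\det(\Z_S)$, reducing to size-$d$ blocks, and evaluate a single block. The core computation is the identity
\[
\E\big[\det(\X_S)\det(\Z_S)\big]=d!\,\det(\C),\qquad \C_{st}\defeq\E\big[(\x)_s(\z)_t\big],
\]
valid for any $d$ i.i.d.\ rows, where $\z$ is the $\Z$-row matched to $\x$. This follows from the Leibniz expansion of both determinants, independence across the $d$ rows to factor the expectation, and the collapse $\sum_{\sigma,\tau}\sgn(\sigma)\sgn(\tau)\prod_{s=1}^d\C_{\sigma(s)\,\tau(s)}=d!\det(\C)$ obtained by reindexing via $\rho=\tau\circ\sigma^{-1}$. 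In our setting $\C$ is precisely $\sigd$ with its $j$-th column replaced by $\cb\defeq\E_D[\x\,y]=\sigd\wds$, because $\E[(\x)_s(\z)_t]=(\sigd)_{st}$ for $t\neq j$ while $\E[(\x)_s f(\x)]=(\cb)_s$ for $t=j$.

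Finally, summing over the $\binom{k}{d}$ identical blocks and invoking Cramer's rule a second time, now for the population system $\sigd\wds=\cb$ (which gives $\det(\C)=(\wds)_j\det(\sigd)$), yields $\E_{\vsk}[w_j^*]=\binom{k}{d}\,d!\,\det(\C)/\big(d!\binom{k}{d}\det(\sigd)\big)=(\wds)_j$; assembling the coordinates proves $\E_{\vsk}[\w^*(\S)]=\wds$. I expect the main obstacle to be the combinatorial identity $\E[\det(\X_S)\det(\Z_S)]=d!\det(\C)$ together with the bookkeeping that keeps the volume factor cancelling cleanly. A secondary point needing care is integrability, but each term of the permutation sum factors into second moments of a single row — finite by $\E[\|\x\|^2]<\infty$, $\E[y^2]<\infty$, and Cauchy–Schwarz for the mixed term $\E[(\x)_s y]$ — so every interchange of expectation and the conditioning step are justified.
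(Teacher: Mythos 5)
Your proof is correct, and for the case $k>d$ it takes a genuinely different route from the paper's. The paper first proves the $k=d$ case exactly as you do — Cramer's rule turns $\det(\X)^2(\w^*(\S))_i$ into $\det(\X)\det(\X\!\overset{i}{\leftarrow}\!\y)$, and the asymmetric identity $\E[\det(\A)\det(\B)]=d!\det(\E[\a\b^\top])$ (its Lemma~\ref{l:determinant}) finishes it — but then handles $k>d$ by induction, invoking a nontrivial leave-one-out decomposition of the pseudoinverse, $\X^\dagger=\sum_i\frac{\det(\X_{-i}^\top\X_{-i})}{(k-d)\det(\X^\top\X)}(\I_{-i}\X)^\dagger$, from prior work, together with exchangeability of the i.i.d.\ rows. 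You instead observe that for any $k$ Cramer's rule gives $\det(\X^\top\X)\,w^*_j=\det(\X^\top\Z)$ with $\Z$ equal to $\X$ with column $j$ replaced by the (conditional means of the) responses, and then apply Cauchy--Binet plus the size-$d$ block identity $\E[\det(\X_S)\det(\Z_S)]=d!\det(\C)$ — which is exactly the paper's Lemma~\ref{l:determinant} for paired i.i.d.\ rows, whose general-$k$ form the paper itself proves by the same Cauchy--Binet induction. In effect you apply that lemma once, at full sample size, and thereby bypass the pseudoinverse formula and the induction on $k$ entirely; the factor $d!\binom{k}{d}$ it produces cancels the normalizer of $\vskx$ on the nose. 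This is arguably cleaner and more self-contained (the only external ingredients are Cramer, Cauchy--Binet, and the Leibniz-expansion computation), at the cost of not exhibiting the reduction $\E_{\vsk}[\w^*]=\E_{\vskm}[\w^*]$ that the paper's induction makes explicit. Your handling of the degenerate event $\det(\X^\top\X)=0$ (both sides of the Cramer identity vanish since $\rank(\X^\top\Z)\le\rank(\X)<d$), and of integrability via Cauchy--Schwarz on the finitely many Leibniz terms, is also sound; the preliminary tower-rule step replacing $y_i$ by $\E[y\,|\,\x_i]$ is harmless but not even necessary, since the lemma applies directly to the pairs $(\x_i,\x_i\!\overset{j}{\leftarrow}\!y_i)$.
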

Combining Theorems~\ref{l:decomposition} and \ref{t:unbiased}, we
conclude that an i.i.d.~sample only needs to be augmented
by a dimension-size volume-rescaled sample (i.e., $k=d$) 
so that the least squares estimator becomes unbiased. 
\begin{corollary}\label{t:augmenting}
  Let $\S \!=\! \{(\x_1,y_1),\textnormal{\fontsize{6}{6}\selectfont \dots},(\x_k,y_k)\}
  \!\overset{\textnormal{\fontsize{6}{6}\selectfont i.i.d.}}{\sim}\!
  D^k$, for any $k\geq 0$. Consider the following procedure:
\begin{align*}
&\textnormal{Sample}& &\xbt_{1},\dots,\xbt_{d}\ \sim\ \vsdx,\\
    &\textnormal{Query}&&\yt_{i}\ \sim\ D_{{\cal Y} |\x=\xbt_{i}}\quad \forall_{i=1..d}.
\end{align*}
Then for $\S_{\circ}=\{(\xbt_{1},\yt_{1}),\dots,(\xbt_{d},\yt_{d})\}$,
\begin{align*}
 \E\big[\w^*(\langle \S, \S_{\circ}\rangle)\big] &= \E_{\S\sim
  D^k}\!\big[\, \E_{\S_{\circ}\sim \vsd}[\,\w^*(\langle \S,
                                                  \S_{\circ}\rangle)\,]\, \big]\\[1mm]
\textnormal{(Theorem~\ref{l:decomposition})}\quad&=\E_{\St \sim \vs{k+d}}
\big[\,\w^*\!\big(\,\St\,\big)\,\big] \\
\textnormal{(Theorem~\ref{t:unbiased})}\quad  &= \wds.
\end{align*}
\end{corollary}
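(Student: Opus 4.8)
The plan is to verify the three displayed equalities in order, each by invoking an already-established fact; the substance lies in checking that the hypotheses of Theorems~\ref{l:decomposition} and~\ref{t:unbiased} are met and in handling the responses correctly. Throughout, $\w^*$ is well-defined almost surely because any $\mathrm{VS}$-distributed sample of size at least $d$ spans $\R^d$ with probability one, and the finite second-moment assumptions guarantee the relevant expectations are finite. For the first equality I would argue by independence: the i.i.d.\ sample $\S\sim D^k$ and the augmenting sample $\S_\circ=\{(\xbt_1,\yt_1),\dots,(\xbt_d,\yt_d)\}$ are generated by two completely separate random mechanisms and are therefore independent, so the expectation of $\w^*(\langle\S,\S_\circ\rangle)$ over the joint procedure splits, by Fubini, into the nested expectation $\E_{\S\sim D^k}\big[\E_{\S_\circ\sim\vsd}[\,\w^*(\langle\S,\S_\circ\rangle)\,]\big]$, which is exactly the right-hand side of the first line.

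The middle equality is the heart of the proof and the step I expect to require the most care. I would apply Theorem~\ref{l:decomposition} with its sample size set to $k+d$: that theorem says an i.i.d.\ sample of size $(k+d)-d=k$ from $\dx$, concatenated with a $\vsdx$ sample and randomly permuted, is distributed as ${\mathrm{VS}_{\!\dx}^{k+d}}$. The one gap to close is that Theorem~\ref{l:decomposition} governs only the input marginals, whereas here the samples carry responses. I would close it by noting that in each of $D^k$, $\vsd$, and $\vs{k+d}$ the response of every point is drawn from the same conditional law $D_{{\cal Y}|\x}$, independently across points given the inputs; hence attaching responses commutes with the decomposition, and the input-level identity lifts to the full joint law, giving $\langle\S,\S_\circ\rangle\sim\vs{k+d}$. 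Finally, since $\w^*$ does not depend on the ordering of the sample, the random permutation appearing in Theorem~\ref{l:decomposition} leaves $\w^*(\langle\S,\S_\circ\rangle)$ unchanged and may be dropped.

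The last equality is then immediate from Theorem~\ref{t:unbiased} applied at sample size $k+d$, which states precisely that the least squares estimator on a $\vs{k+d}$ sample is unbiased for $\wds$. The boundary case $k=0$ is covered: there $\S$ is empty and the claim reduces to the unbiasedness of $\vsd$ itself. The main obstacle, as noted, is the response-lifting argument in the middle step; the independence factorization and the permutation-invariance of $\w^*$ are routine.
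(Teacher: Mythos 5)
Your proposal is correct and follows essentially the same route as the paper, which proves the corollary precisely by the displayed chain: Fubini/independence for the first equality, Theorem~\ref{l:decomposition} for the second, and Theorem~\ref{t:unbiased} for the third. The extra details you supply (lifting the input-level decomposition to the joint law with responses via $D_{{\cal Y}|\x}$, and the permutation-invariance of $\w^*$) are left implicit in the paper but are exactly the right points to make explicit.
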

\ifisarxiv\vspace{-2mm}

\fi
To put the above result in context,
we note that in the fixed design case it was known
that a single volume sampled subset $\S$ of any size $k\geq d$ produces an
unbiased least squares estimator (see, e.g., \cite{unbiased-estimates-journal}).
However this required that all $k$
points be sampled jointly from this special distribution.
Thus, Corollary~\ref{t:augmenting}
says that volume sampling can be used to correct the bias in existing
i.i.d.~samples via sample augmentation 
(requiring labels/responses for only $d$ additional points from $\vsdx$).
This is important in active learning scenarios, where
samples from $\dx$ (unlabeled data) are cheaper than draws
from $D_{{\cal Y} |\x}$ (label queries). We also develop
methods for generating the small sample from $\vsdx$ only using
additional unlabeled samples from $\dx$ (see Section \ref{s:algs-overview}). 
Indeed, active learning was a motivation for volume sampling in previous works~\cite{unbiased-estimates-journal,leveraged-volume-sampling}.

\subsection{A Simple Gaussian Experiment}
\label{s:experiment}
The bias in least squares estimators is present even when input is a
standard Gaussian. As an example, we let $d=5$ and set:
\begin{align*}
\x^\top\! = (x_1,\dots,x_d)\overset{\text{i.i.d.}}{\sim} \Nc(0,1) ,\quad\ y =
  \xi(\x)\! +\! \epsilon,
\end{align*}
where the response $y$ is
a non-linear function $\xi(\x)$ plus independent white noise
$\epsilon$. Note that it is crucial that the response contains some
non-linearity, and it is something that one would
expect in real datasets. For the purposes of the experiment, we wish to make the
least squares solution easy to compute algebraically, so we choose the
following response model:
\ifisarxiv\vspace{-2mm}\fi
\begin{align*}
  \xi(\x) =
  \sum_{i=1}^d x_i + \frac{x_i^3}{3},\quad \epsilon \sim \Nc(0,1).
\end{align*}
\ifisarxiv\vspace{-5mm}

\fi
We stress that there is nothing special about the choice of this
response model other than the fact that it contains a non-linearity
and it is easy to solve algebraically for $\wds$.
We now compare the bias of the least squares estimator produced
for this problem by i.i.d.~sampling of $k$ points, with that of an
estimator computed from $k-d$ i.i.d.~samples augmented by $d$ volume
samples (so that the total number of samples is the same in both
    cases). We used a special formula (Theorem~\ref{t:gaussian} below) to produce the volume-rescaled samples
    when $\dx$ is Gaussian.
Our strategy is to produce many such estimators
$\wbh_1,\dots,\wbh_T$ independently (e.g. by computing them in
parallel on separate machines), and look at
estimation error of the average of those estimators, i.e.
\ifisarxiv\vspace{-4mm}\fi
\begin{align*}
\text{estimation error:}\quad
  \bigg\|\Big(\frac1T\sum_{t=1}^T\wbh_t\Big)
  - \wds\bigg\|^2.
\end{align*}
\ifisarxiv\vspace{-5mm}

\fi
\begin{figure}
  \vspace{-1.5mm}
  \includegraphics[width=0.5\textwidth]{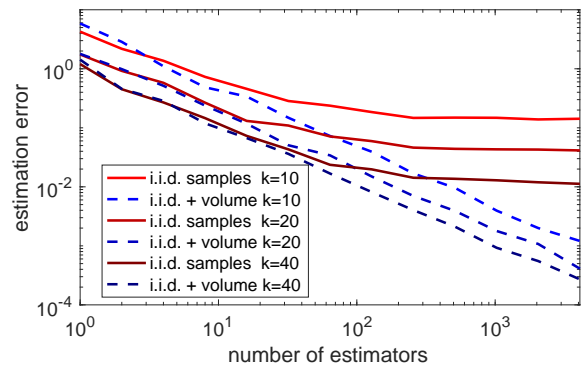}
  \ifisarxiv\vspace{-6mm}\fi
  \caption{Experiment with Gaussian inputs.}\label{f:experiment}
\end{figure}%
Figure~\ref{f:experiment} shows the above experiment for several
values of $k$ and a range of values of $T$ (each presented data point is
an average over 50 runs). 
Since the corrected estimator ``i.i.d.~+~volume'' is 
unbiased, the estimation error of the average estimator
exhibits $\frac1T$ convergence to zero (regardless of $k$). This type
of convergence
appears as a straight line on the log-log plot. In
contrast, the i.i.d.~sampled estimator
is biased for any sample size (although the bias decreases with $k$),
and therefore the averaged estimator does not converge to the optimum.

\subsection{Sampling Algorithms}
\label{s:algs-overview}
To our knowledge, existing literature on algorithms for DPPs and
volume sampling (other than the exceptions discussed below)
generally assumes full or considerable knowledge of the distribution
$\dx$, which often may not be the case in practice, for
example when the data is coming in a stream, or is drawn from a
larger population. In this work, we are primarily interested in
the setting where access to distribution $\dx$ is limited to some approximate
statistics plus the ability to draw i.i.d.~samples from
it. Two key concerns in this model are the time and sample complexities
of volume-rescaled sampling for a given distribution $\dx$.

We first consider distributions $\dx$ with bounded support.
We use a standard 
notion of \emph{conditioning number} for multivariate distributions
(see, e.g.,~\cite{chen2017condition}): 
\begin{align*}
\kd \defeq \sup_{\xbt\in\text{supp}(\dx)} \xbt^\top\sigd^{-1}\xbt.
\end{align*}
When $\kd$ is known to be bounded and we are given the
exact knowledge of the covariance matrix $\sigd$, then it is
possible to produce a volume-rescaled sample $\vsdx$ using a
classical algorithm from the DPP literature described in
\cite{dpp-first-algorithm} by
employing rejection sampling (see also \cite{dpp-statistics}). This approach
requires $O(\kd\log(d))$ draws from $\dx$ and runs in time
$O(\kd d^2\log(d))$.
However, sampled sets produced by that
algorithm diverge from the desired distribution unless the given covariance
matrix matches the true one exactly. This
may be unrealistic when we do not have
full access to the distribution $\dx$. Is it possible to sample from
$\vsdx$ without the exact knowledge of $\sigd$?

We answer the question affirmatively.
We show that a recently proposed algorithm from \cite{leveraged-volume-sampling} for
fixed design volume sampling can be adapted to arbitrary $\dx$
in such a way that it only requires an approximation of the
covariance matrix $\sigd$, while still returning samples exactly
from $\vsdx$. The original algorithm,
called \emph{determinantal rejection sampling}, samples
from a given finite design matrix 
(i.e., a discrete distribution $\dx$ which is fully-known), but it was
shown in \cite{leveraged-volume-sampling} that the procedure only requires 
an approximation of the covariance matrix 
$\Sigmabh=(1\pm\epsilon)\sigd$, where $\epsilon=O(\frac1d)$. 
We extend this algorithm to handle arbitrary
distributions $\dx$, and also improve the analysis by reducing the required
approximation quality to $\epsilon = O(\frac1{\sqrt{d}})$.
\begin{theorem}\label{t:det} 
Given any $\Sigmabh\in\R^{d\times d}$ s.t.
  \begin{align*}
      &(1-\epsilon)\sigd\preceq\Sigmabh\preceq
    (1+\epsilon)\sigd,\;
      \\&\text{where}\;
      \epsilon=\frac1{\sqrt{2d}}\;\text{and}\;
  K\!\geq\! \frac{\kd}{1-\epsilon}, 
  \end{align*}
there is an algorithm which
returns $\xbt_1,\dots,\xbt_d\sim \vsdx$, and
with probability at least $1-\delta$ its sample and time complexity is
$O(K d(\ln(\frac1\delta))^2)$ and $O(K d^3(\ln(\frac1\delta))^2)$, respectively.
\end{theorem}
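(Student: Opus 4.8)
The plan is to adapt the \emph{determinantal rejection sampling} scheme of \cite{leveraged-volume-sampling} from finite design matrices to an arbitrary $\dx$ of bounded support, and to bound its cost through a second-order concentration analysis of determinants. The algorithm runs in rounds: in each round I would draw $m=O(Kd\ln(1/\delta))$ i.i.d.\ points $\x_1,\dots,\x_m\sim\dx$, accept the whole batch with a probability that, as a function of the sample, is proportional to $\det\!\big(\sum_{i=1}^m\x_i\x_i^\top\big)$ (with a data-independent normalizer built from $\Sigmabh$), and repeat until a batch is accepted; then run an \emph{exact} discrete size-$d$ volume sampling step on the accepted $m$ points and output the selected $d$ points.

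For correctness I would show that this composition returns exactly $\vsdx$. The continuous Cauchy--Binet identity (Lemma~\ref{l:determinant}) gives $\sum_{|S|=d}\det(\X_S)^2=\det\!\big(\sum_i\x_i\x_i^\top\big)$, so the discrete volume sampling step selects $S$ with probability $\det(\X_S)^2/\det\!\big(\sum_i\x_i\x_i^\top\big)$. The factor $\det\!\big(\sum_i\x_i\x_i^\top\big)$ in this denominator is exactly what the acceptance probability is proportional to, so the two cancel and the joint law of the accepted-and-selected points is proportional to $\det(\text{selected})^2\prod_j d\dx$; integrating out the $m-d$ unselected i.i.d.\ points leaves precisely the $\vsdx$ density. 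The key structural point is that $\Sigmabh$ enters only through the data-independent normalizer, so it affects the acceptance rate but never the conditional law of the output: the sampler is exact for \emph{any} $\Sigmabh$, and the approximation quality is purely an efficiency question. (Equivalently, one can read this through Theorem~\ref{l:decomposition}, viewing the accepted batch as a size-$m$ volume-rescaled sample from which volume sampling extracts its size-$d$ core.)

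The heart of the proof is the efficiency bound: the per-round acceptance probability must be $\Omega(1)$. Let $\M=\sigd^{-1/2}\big(\tfrac1m\sum_i\x_i\x_i^\top\big)\sigd^{-1/2}$, so that the acceptance rate is governed by $\det(\M)$ against a normalizer determined by $\Sigmabh$. I would analyze $\log\det(\M)=\tr(\M-\I)-\tfrac12\|\M-\I\|_F^2+\cdots$ and control the two error sources at second order. For the random fluctuation, a direct second-moment bound gives $\E\,\|\M-\I\|_F^2=O(Kd/m)=O(1)$ when $m=\Omega(Kd)$ and $\kd\le K$, while the first-order term $\tr(\M-\I)=\tfrac1m\sum_i\x_i^\top\sigd^{-1}\x_i-d$ has mean $0$ and concentrates; hence $\det(\M)=\Theta(1)$ with high probability. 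For the deterministic bias, the same expansion applied to $\sigd^{-1/2}\Sigmabh\sigd^{-1/2}$ shows that, once the overall scale of $\Sigmabh$ is normalized away (the scheme should depend on $\Sigmabh$ only through its conditioning/shape, not its scale), only a Frobenius-type term survives, bounded by $d\,\epsilon^2\le\tfrac12$ for $\epsilon=1/\sqrt{2d}$. This second-order (Frobenius) accounting is precisely the source of the $d$-factor improvement: a crude bound through the operator norm or $(1+\epsilon)^d$ forces $\epsilon=O(1/d)$, whereas $\sum_j\delta_j^2\le d\,\epsilon^2$, where the $\delta_j$ are the eigenvalues of $\sigd^{-1/2}\Sigmabh\sigd^{-1/2}-\I$, only forces $\epsilon=O(1/\sqrt d)$. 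A geometric-tail argument then caps the number of rounds at $O(\ln(1/\delta))$ with probability $1-\delta$; with $O(Kd\ln(1/\delta))$ samples per round and the $O(md^2)$ cost of discrete volume sampling, this yields the stated $O\!\big(Kd(\ln\tfrac1\delta)^2\big)$ sample and $O\!\big(Kd^3(\ln\tfrac1\delta)^2\big)$ time bounds.

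The main obstacle I anticipate is making the efficiency analysis tight enough to reach $\epsilon=O(1/\sqrt d)$ while keeping the sampler provably exact. Both the \emph{random} fluctuation of $\tfrac1m\sum_i\x_i\x_i^\top$ and the \emph{deterministic} shape mismatch between $\Sigmabh$ and $\sigd$ must be forced into a single second-order Frobenius-squared quantity of size $O(1)$, which requires neutralizing the first-order (trace) contributions --- automatic for the mean-zero random part, but for the $\Sigmabh$-part only achievable through insensitivity to the scale of $\Sigmabh$. At the same time the acceptance rule must stay a valid probability (the normalizer must dominate the empirical determinant on accepted rounds) \emph{without} conditioning in a way that would distort the output distribution. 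Reconciling ``exact for any $\Sigmabh$'' with ``efficient only when $\Sigmabh$ matches $\sigd$ in the scale-free Frobenius sense'' is the delicate part of the argument.
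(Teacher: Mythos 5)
Your high-level architecture matches the paper's: a batch rejection step whose acceptance probability is proportional to the determinant of the empirical covariance, followed by discrete volume sampling to extract the size-$d$ core (correctness via Cauchy--Binet, i.e.\ Lemma~\ref{l:composition} together with the reasoning behind Theorem~\ref{l:decomposition}), and you correctly identify that $\epsilon=O(1/\sqrt d)$ suffices because the shape mismatch between $\Sigmabh$ and $\sigd$ enters the acceptance rate only at second order. However, there is a genuine gap: you apply the determinantal acceptance test directly to raw i.i.d.\ draws from $\dx$, omitting the leverage-score rejection and rescaling step that is the crux of the paper's algorithm. In Algorithm~\ref{alg:det}, each point is first accepted with probability $\x^\top\Sigmabh^{-1}\x/K$ and then rescaled to $\xbt=\sqrt{d/(\x^\top\Sigmabh^{-1}\x)}\,\x$, so that every retained point has $\xbt^\top\Sigmabh^{-1}\xbt=d$ exactly. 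This pins $\tr(\Sigmabt\Sigmabh^{-1})=d$ deterministically, whence $\det(\Sigmabt\Sigmabh^{-1})\le 1$ by AM--GM --- that is what makes the batch-acceptance rule a valid Bernoulli probability \emph{without any clipping}, and simultaneously lets the expected acceptance probability be computed in closed form via Lemma~\ref{l:determinant} as $(1-d^2/t)\det(\sigd\Sigmabh^{-1})/(\tfrac1d\tr(\sigd\Sigmabh^{-1}))^d$, which the Kantorovich inequality bounds below by $\tfrac12(1-\epsilon^2)^d\ge\tfrac14$. Without this step, the only data-independent normalizer that dominates $\det(\sum_i\x_i\x_i^\top)$ almost surely is of order $(m\kd/d)^d\det(\sigd)$ (since $\tr(\sum_i\x_i\x_i^\top\sigd^{-1})\le m\kd$), while the expectation is $d!\binom{m}{d}\det(\sigd)\le m^d\det(\sigd)$; the acceptance probability is then at most $(d/\kd)^d$, exponentially small whenever $\kd\gg d$. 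Choosing a smaller normalizer forces clipping at $1$, which distorts the output law. You flag exactly this tension at the end of your write-up, but it is not a loose end --- it is the problem the leverage-score preprocessing exists to solve, and your scheme does not solve it.

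A secondary issue is the efficiency analysis itself. Your route bounds $\E\|\M-\I\|_F^2=O(\kd d/m)$ and then Taylor-expands $\log\det(\M)$; but that expansion requires operator-norm control of $\M-\I$ (which a Frobenius second moment of order $1$ does not give --- an eigenvalue of $\M$ near zero is compatible with $\|\M-\I\|_F^2\approx 1$ and kills the determinant), and in any case what must be lower-bounded is the \emph{expected} acceptance probability, not a high-probability event. The paper sidesteps concentration entirely: after the rescaling, $\E[\det(\Sigmabt\Sigmabh^{-1})]$ is an exact identity (Lemma~\ref{l:fast-rejection}, via Lemma~\ref{l:determinant} applied to the reweighted vectors), and the only inequality needed is Kantorovich's $A/H\le(A(a,b)/G(a,b))^2$ on the eigenvalues of $\Sigmabh\sigd^{-1}$, giving $((1-\epsilon)(1+\epsilon))^d\ge 1-d\epsilon^2=\tfrac12$. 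Your ``scale-free Frobenius'' heuristic is the right intuition for why $\epsilon=1/\sqrt{2d}$ works, but to make it rigorous you would need both the trace-pinning rescaling (to kill the first-order term of the random part) and the exact expectation computation (to avoid needing $m=\Omega(\kd\log d)$ for operator-norm concentration). Finally, note the paper must also undo the reweighting at the end --- it returns the original $\x_{i_j}$ rather than the rescaled $\xbt_{i_j}$, and verifies via a change of measure that the leverage-score factors cancel so the output is exactly $\vsdx$; this cancellation is an essential part of the correctness argument once the preprocessing step is in place.
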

\begin{remark}\label{r:epsilon}
Our $\epsilon=\frac{1}{\sqrt{2d}}$ condition improves the result from
\cite{leveraged-volume-sampling} (where $\epsilon = \frac1{16d}$ was used). When $\dx$ is given as a finite set of $n$
vectors in $\R^d$, the main cost of volume sampling
is an $\Ot(nd + d^3/\epsilon^2)$ preprocessing step of computing
  $\Sigmabh$, where $\Ot(\cdot)$ hides $\operatorname{polylog}(n,d,1/\epsilon,1/\delta)$. Setting $\epsilon\!=\!\frac{1}{\sqrt{2d}}$ in Appendix~F of
\cite{leveraged-volume-sampling}, we reduce that cost from $\Ot(nd+d^5)$ to
$\Ot(nd+d^4)$.
\end{remark}
In Section~\ref{s:algorithm}, we discuss how $\Sigmabh$ can be obtained just
by sampling from the distribution $\dx$, which requires
$m=O(\kd
d\ln(d))$ samples with high probability and time
$O(md^2)=O(\kd d^3\ln(d))$, nearly the same (up to log terms)
as for the algorithm of Theorem~\ref{t:det} (here,
the improved $\epsilon$ also plays a key role).

The conditioning
number $\kd$ can be much larger than the dimension $d$ of the
distribution $\dx$, so obtaining an appropriate estimate of
$\sigd$ required for Theorem~\ref{t:det} may still be
prohibitively expensive. Thus, it is natural to ask if there are some
structural assumptions on distribution $\dx$ which can allow us to
sample from $\vsdx$ without any estimate of the covariance matrix.
In the following result, we exploit a connection between
volume-rescaled sampling and the Wishart distribution to show that when $\x$
is a centered multivariate normal, then without any knowledge of 
$\sigd$, we can produce a volume-rescaled sample from only $2d+2$ samples
of $\dx$ and in $O(d^3)$ running time.
\begin{theorem}\label{t:gaussian}
  Suppose that the point distribution $\dx$ 
    is a Gaussian $\Nc(\zero, \sigd)$ and let
  $\x_1,\dots,\x_{2d+2} {\sim}\dx^{2d+2}$. 
  Then $\xbt_1,\dots,\xbt_d  \ \sim\ \vsdx$,
  where 
    \begin{align*}
\xbt_i \defeq
    \bigg(\sum_{j=d+1}^{2d+2}\x_j\x_j^\top\bigg)^{\frac12}
    \bigg(\sum_{j=1}^d\x_j\x_j^\top\bigg)^{-\frac12}\x_i.  \end{align*}
\end{theorem}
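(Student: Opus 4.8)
The plan is to reduce the claim to the polar decomposition and exploit the rotational symmetry of the Gaussian. I would collect the first $d$ points into the matrix $\X\in\R^{d\times d}$ whose $i$-th row is $\x_i^\top$, so that $\A\defeq\sum_{j=1}^d\x_j\x_j^\top=\X^\top\X$, and write $\B\defeq\sum_{j=d+1}^{2d+2}\x_j\x_j^\top$ for the matrix built from the remaining $d+2$ points. Stacking the outputs into $\Xs$ (with $i$-th row $\xbt_i^\top$), the defining formula becomes $\Xs=\X\A^{-1/2}\B^{1/2}$. Since $\X$ is a.s.\ invertible, $\U\defeq\X\A^{-1/2}=\X(\X^\top\X)^{-1/2}$ is orthogonal and $\X=\U\A^{1/2}$ is the polar decomposition of $\X$; substituting yields the clean identity $\Xs=\U\B^{1/2}$, whence $\Xs^\top\Xs=\B$. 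Because $\B\succ 0$ a.s., $\Xs=\U\B^{1/2}$ is itself the polar decomposition of $\Xs$. Thus the theorem reduces to showing that the pair (orthogonal factor, PSD factor) of $\Xs$ has the same law as that of a $\vsdx$-distributed matrix; as polar decomposition is a measurable bijection on invertible matrices (where both laws are supported), this gives $\Xs\sim\vsdx$.

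For the target distribution, I would write the density of $\vsdx$ against Lebesgue measure on $\R^{d\times d}$ using $\det(\sum_i\x_i\x_i^\top)=\det(\X^\top\X)$ and $\sum_i\x_i^\top\sigd^{-1}\x_i=\tr(\sigd^{-1}\X^\top\X)$, obtaining a density $\propto\det(\X^\top\X)\exp(-\tfrac12\tr(\sigd^{-1}\X^\top\X))$. This depends on $\X$ only through $\X^\top\X$, so the law is invariant under $\X\mapsto\V\X$ for every $\V\in O(d)$. Consequently the polar orthogonal factor is Haar on $O(d)$ and independent of $\M\defeq\X^\top\X$: conditioned on $\M$, the law of the orthogonal factor is left-invariant, hence Haar, and it does not depend on $\M$. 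To identify the law of $\M$, note that under the plain i.i.d.\ Gaussian $\dx^d$ we have $\M\sim W_d(\sigd,d)$, with density $\propto\det(\M)^{-1/2}\exp(-\tfrac12\tr(\sigd^{-1}\M))$; the volume rescaling multiplies this by $\det(\M)$, giving $\det(\M)^{1/2}\exp(-\tfrac12\tr(\sigd^{-1}\M))$, which is exactly the $W_d(\sigd,d+2)$ density. So under $\vsdx$ the polar factor is Haar, independent of a PSD factor whose square is $W_d(\sigd,d+2)$ -- this is the promised Wishart connection, and it explains why exactly two extra points are needed (they lift the degrees of freedom from $d$ to $d+2$).

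For the constructed sample, the same symmetry applies to the unweighted design: the rows of $\X$ are i.i.d.\ $\Nc(\zero,\sigd)$, so the law of $\X$ is again invariant under $\X\mapsto\V\X$, and hence its polar orthogonal factor $\U$ is Haar on $O(d)$. Moreover $\B$ is a sum of $d+2$ independent outer products of $\Nc(\zero,\sigd)$ vectors, so $\B\sim W_d(\sigd,d+2)$; and since $\B$ is a function of $\x_{d+1},\dots,\x_{2d+2}$ while $\U$ is a function of $\x_1,\dots,\x_d$, the two are independent. Therefore $\Xs=\U\B^{1/2}$ has Haar orthogonal factor independent of a PSD factor whose square is $W_d(\sigd,d+2)$, matching the target exactly, which proves $\Xs\sim\vsdx$, i.e.\ $(\xbt_1,\dots,\xbt_d)\sim\vsdx$.

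The main obstacle is conceptual rather than computational: a brute-force change of variables is awkward because the map $(\x_1,\dots,\x_{2d+2})\mapsto(\xbt_1,\dots,\xbt_d)$ is not a bijection (it collapses $d(2d+2)$ coordinates onto $d^2$). The polar/invariance route avoids this, and the step requiring the most care is the passage from left-$O(d)$-invariance of a matrix law to the conclusion that the polar factor is Haar and independent of $\X^\top\X$; this relies on the uniqueness of the left-invariant probability measure on $O(d)$, together with the a.s.\ invertibility of $\A$ and $\B$ that makes the polar decomposition well-defined and measurable on a full-measure set.
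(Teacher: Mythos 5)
Your proof is correct, but it takes a genuinely different route from the paper's. The paper factors the argument through three pieces: Theorem~\ref{t:wishart} (the density-ratio computation showing $\Xt^\top\Xt\sim W_d(k+2,\sigd)$ under $\vskx$), Lemma~\ref{l:conditional} (an $\epsilon$-neighborhood limiting argument showing that conditioning on the Gram matrix erases the volume rescaling), and Lemma~\ref{l:beta} (reconstruction of the individual Gaussian vectors given their Gram matrix, proved via the matrix-variate Beta and Dirichlet distributions and a recursive splitting theorem from \cite{matrix-variate-distributions}). You share the first piece --- your identification of the law of $\M=\X^\top\X$ under $\vsdx$ as $W_d(d+2,\sigd)$ is exactly Theorem~\ref{t:wishart} --- but you replace the other two with a polar-decomposition/Haar-invariance argument: both the target law $\vsdx$ and the constructed $\Xs=\U\B^{1/2}$ are characterized by a Haar-distributed orthogonal polar factor independent of a $W_d(d+2,\sigd)$ Gram matrix, and polar decomposition is a measurable bijection on the full-measure set of invertible matrices. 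This buys a shorter, more self-contained proof that avoids both the delicate conditioning-on-a-null-event limit and the matrix-variate Dirichlet machinery (and incidentally sidesteps the $\pm 1$ sign ambiguity the paper must address at the end of its proof of Lemma~\ref{l:beta}); the cost is that your argument is tied to $k=d$, where the polar factor is a full element of $O(d)$ (it would extend to $k>d$ by replacing Haar measure on $O(d)$ with the uniform measure on the Stiefel manifold, whereas the paper's lemmas deliver general $k$ directly). One small point to reconcile: the paper's Note defines $\A^{\frac12}$ as the lower-triangular Cholesky factor, while your identity $\Xs=\U\B^{1/2}$ with $\U=\X(\X^\top\X)^{-1/2}$ orthogonal implicitly uses symmetric square roots; this is harmless, since any two square roots of the same positive definite matrix differ by right-multiplication by an orthogonal matrix depending only on that matrix, which the independent Haar factor absorbs without changing the law of $\Xs$ --- but it deserves an explicit sentence.
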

\textbf{Note.} 
For a positive definite matrix $\A$, we define $\A^{\frac12}$
as the unique lower triangular matrix with positive diagonal
entries s.t. $\A^{\frac12}(\A^{\frac12})^\top=\A$.

Finding other distribution families which allow for
volume-rescaled sampling with bounded sample complexity is an
interesting future research direction.


\section{SAMPLE AUGMENTATION}
\label{s:unbiased}
Let $\a_i^\top$ denote the $i$th row of a matrix $\A$. First, we
extend a classic lemma by
\cite{expected-generalized-variance}, which was originally used to show the expected value of a
metric in multivariate statistics known as ``generalized variance''. 
\begin{lemma}
  [based on \cite{expected-generalized-variance}]
  \label{l:determinant}
  If the (transposed) rows of the random matrices $\A,\B\!\in\!\R^{k\times d}$
  are sampled as pairs of vectors 
  $(\a_1,\b_1),\dots,(\a_k,\b_k)$ i.i.d.~from a distribution over
random vectors  $(\a,\b)\!\in\!\R^{d\times 2}$ such that $\E[\a\b^\top]$ exists, then
\begin{align*}
\E \big[\det(\A^\top\B)\big] &= d!{k\choose d}\det\!\big(\E[\a\b^\top]\big).
\end{align*}
\end{lemma}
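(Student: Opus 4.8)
The plan is to reduce the claim to a single application of the Cauchy--Binet formula followed by a direct moment computation on $d$ i.i.d.\ pairs. Writing the rows explicitly, $\A^\top=[\a_1,\dots,\a_k]$ and $\B$ has rows $\b_i^\top$, so $\A^\top\B=\sum_{i=1}^k\a_i\b_i^\top$ is a $d\times d$ matrix. Since $k\geq d$, Cauchy--Binet gives
\[
\det(\A^\top\B)=\sum_{S\in\binom{[k]}{d}}\det(\A_S)\det(\B_S),
\]
where $\A_S,\B_S\in\R^{d\times d}$ are the submatrices of rows indexed by $S$. Taking expectations and interchanging $\E$ with the finite sum reduces the entire statement to computing $\E[\det(\A_S)\det(\B_S)]$ for a single $d$-subset $S$; by the i.i.d.\ assumption this value is identical for every $S$, and there are $\binom{k}{d}$ of them, which is where the binomial prefactor will come from.

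For a fixed $S=\{i_1,\dots,i_d\}$ I would expand both determinants by the Leibniz formula over the symmetric group $\Sd$,
\[
\det(\A_S)\det(\B_S)=\sum_{\sigma,\tau\in\Sd}\sgn(\sigma)\sgn(\tau)\prod_{j=1}^d(\a_{i_j})_{\sigma(j)}\,(\b_{i_j})_{\tau(j)}.
\]
Because the pairs $(\a_{i_1},\b_{i_1}),\dots,(\a_{i_d},\b_{i_d})$ are i.i.d., the expectation of each product factorizes across $j$, and each factor equals the corresponding entry of $M\defeq\E[\a\b^\top]$, namely $\E[(\a)_{\sigma(j)}(\b)_{\tau(j)}]=M_{\sigma(j),\tau(j)}$. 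This yields $\E[\det(\A_S)\det(\B_S)]=\sum_{\sigma,\tau}\sgn(\sigma)\sgn(\tau)\prod_{j=1}^d M_{\sigma(j),\tau(j)}$.

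The key step is to collapse this double sum into $d!\det(M)$. I would substitute $\rho\defeq\tau\circ\sigma^{-1}$ so that $\sgn(\sigma)\sgn(\tau)=\sgn(\rho)$, and reindex the product by $p=\sigma(j)$, giving $\prod_j M_{\sigma(j),\tau(j)}=\prod_{p=1}^d M_{p,\rho(p)}$, which no longer depends on $\sigma$. The inner sum over $\rho$ is exactly $\det(M)$, while the outer sum over the $d!$ permutations $\sigma$ contributes the factor $d!$. Hence $\E[\det(\A_S)\det(\B_S)]=d!\det(\E[\a\b^\top])$, and summing over the $\binom{k}{d}$ subsets gives the claimed $d!\binom{k}{d}\det(\E[\a\b^\top])$.

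The only technical point to verify is the interchange of expectation with the finite Leibniz and Cauchy--Binet sums: since $\E[\a\b^\top]$ exists, every $\E[|(\a)_p(\b)_q|]$ is finite, and by independence across the $d$ rows each expanded monomial is absolutely integrable, justifying the swaps. The genuinely substantive step is the permutation-reindexing collapse above; everything else is bookkeeping.
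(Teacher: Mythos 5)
Your proof is correct. The core computation---expanding $\det(\A_S)\det(\B_S)$ by the Leibniz formula, factorizing the expectation across the $d$ independent rows, and collapsing the resulting double sum over $\Sd\times\Sd$ into $d!\det(\E[\a\b^\top])$ via the substitution $\rho=\tau\circ\sigma^{-1}$---is exactly the paper's argument for the base case $k=d$. The only divergence is in how the general case is reduced to $k=d$: you apply Cauchy--Binet in full, writing $\det(\A^\top\B)$ as a sum over all $\binom{k}{d}$ size-$d$ row subsets and invoking the i.i.d.\ (hence exchangeable) rows to make every term contribute the same value, whereas the paper uses the leave-one-out identity $\det(\A^\top\B)=\frac{1}{k-d}\sum_{i=1}^k\det(\A_{-i}^\top\B_{-i})$ (itself a consequence of Cauchy--Binet) and runs an induction on $k$, with the prefactor emerging from $\frac{k}{k-d}\binom{k-1}{d}=\binom{k}{d}$. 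Your version is slightly more direct in that it dispenses with the induction entirely and makes the origin of the $\binom{k}{d}$ factor transparent; the two reductions are otherwise interchangeable. Your closing remark on the absolute integrability of the expanded monomials addresses a point the paper leaves implicit, and it is handled correctly.
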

The above result is slightly different than what was
presented in
\cite{expected-generalized-variance} (the original one had
$\A=\B$, and the sample mean was subtracted from the vectors
before constructing the matrix $\A^\top\A$), but the analysis is
similar (see proof in Appendix~\ref{a:unbiased}).
Note that for $\a=\b=\x$, Lemma~\ref{l:determinant} shows that
$\vskx$ integrates to $1$, making it a well-defined probability distribution:
\begin{align*}
  \E_{\dxk}\bigg[\det\!\Big(\sum_{i=1}^k\x_i\x_i^\top\Big)\bigg]
  = d!{k\choose d}\det(\sigd).
\end{align*}
The asymmetry of Lemma~\ref{l:determinant} is crucial
for showing the unbiasedness property of volume-rescaled sampling.
  \begin{proofof}{Theorem}{\ref{t:unbiased}}
  For $k=d$, the least squares estimator is simply the unique solution to a
  system of linear equations\footnote{Unless $\det(\X)=0$, in which
    case we let $\w^*(\S)=\X^+\y$.},
    so Cramer's rule states that the $i$th
  component of that solution is given by:
\[\big(\w^*(\S)\big)_i =
  \frac{\det(\X\!\overset{i}{\leftarrow}\!\y)}{\det(\X)},\]
    where $\X\!\overset{i}{\leftarrow}\!\y$ is matrix $\X$
    with column $i$ replaced by $\y$. We first prove unbiasedness of
    $\w^*(\S)$ for samples of size $d$:
\begin{align*}
  \E_{\vsd}
    \big[\big(\w^*(\S)\big)_i\big] &=
\frac{ \E_{D^d}[\det(\X)^2
    \big(\w^*(\S)\big)_i]}{d!\det(\sigd)}\\
  &=\frac{\E_{D^d}\big[\det(\X)\det(\X\!\overset{i}{\leftarrow}\!\y)\big]}{d!\det(\sigd)}\\
\text{(Lemma~\ref{l:determinant})}\quad
&=\frac{\det\!\big(\,\E_D[\x\,(\x\!\overset{i}{\leftarrow}\!y)^\top]\,\big)}
           {\det(\sigd)}\\
    &= \frac{\det\!\big(\,\sigd \!\!\overset{i}{\leftarrow}\!\E_D[\x\,y]\,\big)}
            {\det(\sigd)}
=\big(\wds\big)_{\!i},
\end{align*}
where we applied Lemma~\ref{l:determinant} to the pair of
    $d\times d$ matrices $\A=\X$ and
$\B=\X \overset{i}{\leftarrow} \y$.
The case of $k>d$ follows by induction based on a formula shown in
\cite{unbiased-estimates-journal}:
\begin{align*}
  \E&_{\vsk}
    \big[\w^*(\S)\big]
  =\frac{\E_{\dk}\big[\det(\X^\top\X)
    \,\w^*(\S)\big]}{d!{k\choose d}\det(\sigd)}\\
&\!\overset{(1)}{=} \frac{\E_{\dk}
  \Big[\frac1{k-d}\sum_{i=1}^k
    \det(\X_{-i}^\top\X_{-i})\,\w^*\!\big(\S\backslash
  \{(\x_i,y_i)\}\big)\Big]}{d!{k\choose d}\det(\sigd)}\\
  &= \frac1{k-d}\frac{\sum_{i=1}^k \E_{\dk}\big[\det(\X_{-i}^\top\X_{-i})
    \w^*\!\big(\S\backslash\{(\x_i,y_i)\}\big)\big]}{d!{k\choose
    d}\det(\sigd)}\\
  &\!\overset{(2)}{=}\frac{k}{k-d}\,  \frac{d!{k-1\choose
    d}}{d!{k\choose
    d}}\E_{\vskm}\!\big[\w^*(\S)\big]=\E_{\vskm}\!\big[\w^*(\S)\big],
\end{align*}
where $\X_{-i}$ denotes matrix $\X$ without the $i$th row, $(1)$
follows from the formula shown in \cite{unbiased-estimates-journal}
(given in Lemma~\ref{l:pseudoinverse} of Appendix~\ref{a:unbiased}), while $(2)$ follows because the samples
$\x_1,\dots,\x_k\sim \dxk$ are exchangeable, i.e. 
$\x_1,\dots,\cancel{\x_i},\dots,\x_k$ is distributed identically to $\x_1,\dots,\x_{k-1}$.
\end{proofof}

Finally, our key observation given in Theorem~\ref{l:decomposition} is
that size $k$ volume-rescaled sampling can be decomposed into size $d$
volume-rescaled sampling plus i.i.d.~sampling of $k-d$ points. Note
that a version of this already occurs for discrete volume sampling (see
Section~\ref{s:introduction}). However it was not previously known
even in that case.
\begin{proofof}{Theorem}{\ref{l:decomposition}}
Let $\text{DVS}_{\dx}^k$ denote the distribution of a matrix
    $\X\in\R^{k\times d}$ whose transposed rows are 
    $\{\x_1,\dots,\x_k\}=\sigma(\langle \S ,\S_\circ\rangle)$. The
    probability of  a measurable event $A$ w.r.t.~$\text{DVS}_{\dx}^k$ is: 
\begin{align*}
\E_{\text{DVS}_{\dx}^k}\!&\big[\one_A\big] = \frac1{{k\choose d}}\sum_{T\subseteq [k]:\,|T|=d}
 \!\! \frac{\E_{\dxk}[\one_A\det(\X_T^\top\X_T)]}
                     { d! \det(\sigd)}\\
&\!\!=\frac1{d!{k\choose d}\! \det(\sigd)}\E_{\dxk}\!\bigg[\one_A \!\!\!
\sum_{T\subseteq [k]:\,|T|=d}\!\!\!
\det(\X_T^\top\X_T)\bigg]\\
  &\!\!\!\overset{(*)}{=}\!\frac1{d!{k\choose d}\!
    \det(\sigd)}\E_{\dxk}\!\big[\one_A
    \det(\X^\top\X)\big] \\
  &\!\!= \E_{\vskx}\![\one_A],
\end{align*}
where $[k]=\{1..k\}$, matrix $\X_T$ consists of the rows of $\X$ indexed by set $T$,
and $(*)$ follows from the Cauchy-Binet formula.
\end{proofof}

\section{VOLUME-RESCALED GAUSSIAN}\label{s:gaussian}
In this section, we obtain a simple formula for producing volume-rescaled
samples when $\dx$ is a centered multivariate Gaussian with any
(non-singular) covariance matrix. We achieve this by making a
connection to the Wishart
distribution. Thus, for this section, assume that
$\x\sim\Nc(\zero,\sigd)$, and let $\x_1,\dots,\x_k\sim
\dxk$ be the transposed rows of matrix $\X$. Then
matrix $\Sigmab=\X^\top\X\in\R^{d\times d}$ is distributed according to
Wishart distribution $W_d(k,\sigd)$ with $k$ degrees of
freedom. The density function of this random matrix is proportional to
$\det(\Sigmab)^{(k-d-1)/2}\exp(-\frac12\tr(\sigd^{-1}\Sigmab))$. On the
other hand, if $\Sigmabt = \Xt^\top\Xt$ is constructed from vectors
$\xbt_1,\dots,\xbt_k\sim\vskx$, then its density function is
multiplied by an additional $\det(\Sigmabt)$, thus increasing
the value of $k$ in the exponent of the determinant. This observation
leads to the following result:
\begin{theorem}\label{t:wishart}
If $\dx\sim \Nc(\zero,\sigd)$ and
    $\xbt_1,\dots,\xbt_k\sim\vskx$
are rows of a random matrix $\Xt\in\R^{k\times d}$, then
\begin{align*}
\Xt^\top\Xt\sim W_d(k+2,\sigd).
\end{align*}
\end{theorem}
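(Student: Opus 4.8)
The plan is to push the distribution forward under the map $\X\mapsto\X^\top\X$ and track densities. The starting point is the classical fact that when the rows of $\X\in\R^{k\times d}$ are drawn i.i.d.\ from $\Nc(\zero,\sigd)$ (with $k\geq d$, as is required for $\vskx$ to be defined), the Gram matrix $\X^\top\X$ follows $W_d(k,\sigd)$, whose density on the cone of positive definite matrices is proportional to $\det(\Sigmab)^{(k-d-1)/2}\exp(-\tfrac12\tr(\sigd^{-1}\Sigmab))$. The key structural observation is that the volume-rescaling factor appearing in Definition~\ref{t:cauchy-binet}, namely $\det\!\big(\sum_{i=1}^k\x_i\x_i^\top\big)=\det(\X^\top\X)$, depends on $\X$ only through the Gram matrix $\Sigmab=\X^\top\X$.

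First I would set up the reweighting cleanly. By Definition~\ref{t:cauchy-binet}, $\vskx$ has density $g(\X)=\det(\X^\top\X)/\big(d!\binom{k}{d}\det(\sigd)\big)$ with respect to $\dxk$, and this density has the form $g(\X)=h(\X^\top\X)$ for $h(\Sigmab)=\det(\Sigmab)/\big(d!\binom{k}{d}\det(\sigd)\big)$. Because the reweighting factor is a function of $\Sigmab$ alone, reweighting $\dxk$ by $h(\X^\top\X)$ and then pushing forward under $\X\mapsto\X^\top\X$ is the same as first pushing $\dxk$ forward (obtaining $W_d(k,\sigd)$) and then reweighting its density by $h$. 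Concretely, writing $\Sigmabt:=\Xt^\top\Xt$, for any measurable set $B$ of matrices the law of the unconscious statistician gives
\[
\Pr(\Sigmabt\in B)=\E_{\dxk}\big[\one_{\{\X^\top\X\in B\}}\,h(\X^\top\X)\big]=\int_B h(\Sigmab)\, dW_d(k,\sigd)(\Sigmab),
\]
so $\Sigmabt$ has density $h(\Sigmab)$ with respect to $W_d(k,\sigd)$.

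Next I would carry out the one-line density computation: multiplying the $W_d(k,\sigd)$ density by $h(\Sigmab)\propto\det(\Sigmab)$ raises the exponent of the determinant by one, yielding something proportional to $\det(\Sigmab)^{(k+2-d-1)/2}\exp(-\tfrac12\tr(\sigd^{-1}\Sigmab))$, which is exactly proportional to the $W_d(k+2,\sigd)$ density. To fix the constant, I would note that $h(\Sigmab)\,W_d(k,\sigd)$ integrates to $1$ --- this is precisely the statement that $\vskx$ is a probability measure, which follows from Lemma~\ref{l:determinant} --- while the $W_d(k+2,\sigd)$ density also integrates to $1$; two proportional densities that both integrate to $1$ are equal, so $\Sigmabt=\Xt^\top\Xt\sim W_d(k+2,\sigd)$.

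The main thing to get right is the pushforward/reweighting step: the whole argument hinges on the rescaling factor being a function of $\X^\top\X$ only, so that it ``survives'' the map $\X\mapsto\X^\top\X$ as a reweighting of the Wishart density rather than mixing nontrivially across the fibers of that map. Everything else is standard Wishart density bookkeeping, with the normalization handed to us for free by Lemma~\ref{l:determinant}.
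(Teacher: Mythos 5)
Your proposal is correct and follows essentially the same route as the paper's proof: both reweight the $W_d(k,\sigd)$ law of $\X^\top\X$ by $\det(\Sigmab)$ (using that the rescaling factor depends on $\X$ only through $\X^\top\X$, and that the normalization is supplied by Lemma~\ref{l:determinant}), and then identify the resulting density with that of $W_d(k+2,\sigd)$ by matching the exponent of the determinant. Your writeup just makes the pushforward-commutes-with-reweighting step more explicit than the paper does.
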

\begin{proof}
  Let $\Sigmab=\X^\top\X \sim W_d(k,\sigd)$ and $\Sigmabt\sim
  W_d(k+2,\sigd)$. 
  For any measurable event $A$ over the random matrix $\Xt^\top\Xt$,
  we have
  \begin{align*}
   \Pr\big(\Xt^\top\Xt\!\in\! A\big) &=  \frac{\E[\one_{[\X^\top\X\in
    A]}\det(\X^\top\X)]}{\E[\det(\X^\top\X)]}\\
    &=\frac{\E[\one_{[\Sigmab\in A]}\det(\Sigmab)]}{\E[\det(\Sigmab)]}
      \overset{(*)}{=}\Pr\big(\Sigmabt\!\in\! A\big),
  \end{align*}
where $(*)$ follows because the density function of
Wishart distribution $\Sigmabt\sim W_d(k+2,\sigd)$ is proportional to
$\det(\Sigmabt) \det(\Sigmabt)^{(k-d-1)/2}\exp(-\frac12\tr(\sigd^{-1}\Sigmabt))$.
\end{proof}
This gives us an easy way to produce the total covariance matrix $\Xt^\top\Xt$ of
volume-rescaled samples in the Gaussian case. We next show that the
individual vectors can also be recovered easily.
\begin{proofof}{Theorem}{\ref{t:gaussian}}
  The proof relies on the following two lemmas.
  \begin{lemma}\label{l:conditional}
    For any $\Sigmab\in\R^{d\times d}$, the conditional distribution
    of $\Xt\sim\vskx$ given $\Xt^\top\Xt=\Sigmab$ is the same as the conditional
    distribution of $\X\sim \dxk$ given $\X^\top\X=\Sigmab$.
  \end{lemma}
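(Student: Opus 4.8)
The plan is to notice that Lemma~\ref{l:conditional} is really a general fact about reweighting: the measure $\vskx$ is just $\dxk$ tilted by a factor that depends on $\X$ \emph{only through} the Gram matrix $\X^\top\X = \sum_i \x_i\x_i^\top$. Conditioning on $\X^\top\X = \Sigmab$ freezes this factor to the constant $\det(\Sigmab)$, which then cancels upon normalization, so the two conditional laws must agree. Concretely, Definition~\ref{t:cauchy-binet} gives the Radon--Nikodym derivative
\[
\frac{d\,\vskx}{d\,\dxk}(\X)=\frac{\det(\X^\top\X)}{d!\binom{k}{d}\det(\sigd)},
\]
which is a measurable function of $\X^\top\X$ alone; the normalizing constant equals $\E_{\dxk}[\det(\X^\top\X)]$ by Lemma~\ref{l:determinant}.

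First I would set up a disintegration. Let $\mu$ denote the law of $\X^\top\X$ under $\dxk$, and let $\{\nu_\Sigmab\}$ be a regular conditional distribution of $\X\sim\dxk$ given $\X^\top\X=\Sigmab$ (these exist since all the spaces involved are Euclidean, hence Polish). For bounded measurable $g$ on $\R^{k\times d}$ and $h$ on symmetric matrices, the tilting formula gives
\[
\E_{\vskx}\!\big[g(\Xt)\,h(\Xt^\top\Xt)\big]
=\frac{\E_{\dxk}\!\big[g(\X)\,h(\X^\top\X)\det(\X^\top\X)\big]}{\E_{\dxk}\!\big[\det(\X^\top\X)\big]}.
\]
Applying the tower rule to the numerator and pulling the $\Sigmab$-measurable factor $h(\Sigmab)\det(\Sigmab)$ out of the inner conditional expectation, this becomes $\int h(\Sigmab)\big(\int g\,d\nu_\Sigmab\big)\,d\tilde\mu(\Sigmab)$, where $\tilde\mu(d\Sigmab)\propto\det(\Sigmab)\,\mu(d\Sigmab)$.

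To finish, I would take $g\equiv 1$ to read off that $\tilde\mu$ is exactly the marginal law of $\Xt^\top\Xt$ under $\vskx$ (and, by Theorem~\ref{t:wishart}, $\tilde\mu=W_d(k+2,\sigd)$, though its precise form is irrelevant here). The displayed identity then exhibits $\nu_\Sigmab$ as a version of the conditional law of $\Xt$ given $\Xt^\top\Xt=\Sigmab$; since $\nu_\Sigmab$ was defined as the conditional law of $\X$ given $\X^\top\X=\Sigmab$ under $\dxk$, the two conditionals coincide, as claimed. The only real care needed is the measure-theoretic bookkeeping --- existence and essential uniqueness of the disintegration $\{\nu_\Sigmab\}$, and verifying that the weight $\det(\X^\top\X)$ is genuinely $\sigma(\X^\top\X)$-measurable so that it may be pulled through the conditional expectation. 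I expect that to be the main (though routine) obstacle; the conceptual content --- reweighting by a function of a statistic leaves the conditional distribution given that statistic unchanged --- is immediate, and notably uses nothing about $\dx$ being Gaussian.
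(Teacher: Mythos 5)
Your proof is correct, and it takes a different (more abstract) route than the paper's. The paper proves the lemma by a hands-on sandwiching argument: it conditions on the event $\Xt^\top\Xt\in C_\Sigmab^\epsilon$, an $\epsilon$-neighborhood of $\Sigmab$ in matrix $2$-norm, bounds the reweighting factor $\det(\X^\top\X)$ between $(1-\epsilon)^d\det(\Sigmab)$ and $(1+\epsilon)^d\det(\Sigmab)$ on that event so that it cancels up to a factor $\big(\tfrac{1+\epsilon}{1-\epsilon}\big)^{\pm d}$, and then lets $\epsilon\to 0$. You instead invoke a regular conditional distribution $\{\nu_\Sigmab\}$ of $\X$ given $\X^\top\X$ under $\dxk$ and pull the $\sigma(\X^\top\X)$-measurable tilting factor through the disintegration; both arguments rest on the same observation that $d\,\vskx/d\,\dxk$ depends on $\X$ only through $\X^\top\X$. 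Your version is cleaner and sidesteps the limit, at the cost of delivering the conclusion only for $\tilde\mu$-almost every $\Sigmab$ rather than the literal ``for any $\Sigmab$'' of the statement; but this is not a real loss, since conditional distributions are only ever defined up to null sets, the paper's own limiting step ($\Pr(\X\in A\mid\X^\top\X\in C_\Sigmab^\epsilon)\to\Pr(\X\in A\mid\X^\top\X=\Sigmab)$) is itself a differentiation-of-measures statement that holds only a.e.\ or for a chosen version, and the downstream use in Theorem~\ref{t:gaussian} integrates over $\Sigmab$ against the Wishart law, for which the a.e.\ statement suffices. You are also right that nothing Gaussian is used here.
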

  While this lemma (proven in Appendix~\ref{a:gaussian}) relies
  primarily on the definition of conditional
  probability, the second one uses properties of the matrix variate
  Beta and Dirichlet distributions.
  \begin{lemma}\label{l:beta}
    For $\Sigmab\!\in\!\R^{d\times d}$ and vectors $\x_1,\dots,\x_k\!\sim\!
    \Nc(\zero,\sigd)$ forming the transposed rows of a matrix $\X$, let
    \vspace{-3mm}
    \begin{align*}
      \xbt_i = \Sigmab^{\frac12}(\X^\top\X)^{-\frac12}\x_i.
    \end{align*}
    Then $\xbt_1,\dots,\xbt_k$ are jointly distributed as $k$ Gaussians
    $\Nc(\zero,\sigd)$ conditioned on $\Xt^\top\Xt=\Sigmab$. 
  \end{lemma}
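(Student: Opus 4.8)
The plan is to recognize $\Xt$ as a point on the manifold $\mathcal M_\Sigmab=\{\Y\in\R^{k\times d}:\Y^\top\Y=\Sigmab\}$ that is determined solely by the ``angular part'' of $\X$, and then to show that this angular part follows exactly the conditional law of a Gaussian sample with prescribed Gram matrix. First I would rewrite the transformation in terms of the thin QR-type factorization $\X=\U\,(\X^\top\X)^{\frac12\top}$, where $\U\defeq\X(\X^\top\X)^{-\frac12\top}$ satisfies $\U^\top\U=\I$; uniqueness of the Cholesky factor guarantees this decomposition is well defined, and $\X$ has full column rank almost surely since $k\ge d$ and $\sigd$ is invertible. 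A direct computation using $\Sigmab^{\frac12}(\Sigmab^{\frac12})^\top=\Sigmab$ then writes $\xbt_i=\Sigmab^{\frac12}(\X^\top\X)^{-\frac12}\x_i$ in matrix form as $\Xt=\X(\X^\top\X)^{-\frac12\top}\Sigmab^{\frac12\top}=\U\,\Sigmab^{\frac12\top}$, so in particular $\Xt^\top\Xt=\Sigmab$ deterministically. Since $\V\mapsto\V\,\Sigmab^{\frac12\top}$ is a bijection from the Stiefel manifold $\{\V:\V^\top\V=\I\}$ onto $\mathcal M_\Sigmab$, the law of $\Xt$ is the pushforward of the law of $\U$ under this map, and it remains only to identify the law of $\U$.

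The crux is to show that $\U$ is Haar-uniform on the Stiefel manifold and independent of $\X^\top\X$, \emph{for every} $\sigd$. The key observation is that the density of $\X$ is $\propto\exp(-\tfrac12\tr(\sigd^{-1}\X^\top\X))$, a function of $\X$ only through $\X^\top\X$; equivalently, $\X$ is left-orthogonally invariant, i.e.\ $\mathbf Q\X\overset{d}{=}\X$ for every $\mathbf Q\in O(k)$, since its vectorized covariance $\sigd\otimes\I_k$ is unchanged by left multiplication. Because the QR factor of $\mathbf Q\X$ equals $\mathbf Q\U$ while its Gram matrix $(\mathbf Q\X)^\top(\mathbf Q\X)=\X^\top\X$ is unchanged, the joint law of $(\U,\X^\top\X)$ satisfies $(\mathbf Q\U,\X^\top\X)\overset{d}{=}(\U,\X^\top\X)$. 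Disintegrating, the conditional law of $\U$ given $\X^\top\X=\Sigmab$ is $O(k)$-invariant for almost every $\Sigmab$; since $O(k)$ acts transitively on the Stiefel manifold, this conditional law is forced to be the unique Haar measure, independently of $\Sigmab$. This simultaneously yields that $\U$ is Haar-distributed and independent of $\X^\top\X$. Alternatively, one can factor Lebesgue measure under the QR change of variables as Haar$(d\U)$ times a Jacobian depending only on the triangular factor, which is precisely where the matrix-variate Beta/Dirichlet computation would enter.

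Finally I would assemble the two pieces. By the independence just established, conditioning a Gaussian sample $\X\sim\dxk$ on $\X^\top\X=\Sigmab$ fixes its triangular factor to $\Sigmab^{\frac12\top}$ while leaving its angular factor Haar-distributed; hence $\X$ given $\X^\top\X=\Sigmab$ is distributed as $\V\,\Sigmab^{\frac12\top}$ with $\V$ Haar on the Stiefel manifold. On the other hand, $\Xt=\U\,\Sigmab^{\frac12\top}$ with the \emph{same} Haar-distributed $\U$, so the two laws coincide, which is exactly the claim. I expect the main obstacle to be making the disintegration rigorous: the event $\{\X^\top\X=\Sigmab\}$ has measure zero, so one must justify existence and $O(k)$-invariance of the regular conditional distribution (via the co-area formula, or equivalently via the explicit QR Jacobian) and verify that $\Xt$ indeed carries the same $\U$ that appears in that conditional description. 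The remaining computations—the matrix identity for $\Xt$ and the invariance of $\sigd\otimes\I_k$—are routine.
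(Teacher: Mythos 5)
Your proof is correct, but it takes a genuinely different route from the paper's. The paper proves this lemma by invoking the matrix--variate Beta distribution (Lemma~3.5 of the cited work on matrix-variate beta), recursively splitting a Wishart matrix $\Sigmab$ into $k$ independent degree-one Wishart components, identifying the resulting sequence with the matrix-variate Dirichlet distribution $D_d(1,\dots,1)$, and then recognizing $\B_i=\xbt_i\xbt_i^\top=\Sigmab^{\frac12}(\X^\top\X)^{-\frac12}\x_i\x_i^\top\big((\X^\top\X)^{-\frac12}\big)^\top\big(\Sigmab^{\frac12}\big)^\top$ as exactly such a Dirichlet decomposition; the vectors $\xbt_i$ are then recovered from the rank-one matrices $\B_i$ via a sign-symmetry argument. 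You instead factor $\X=\U\,(\X^\top\X)^{\frac12\origtop}$, observe that $\Xt=\U\,\Sigmab^{\frac12\origtop}$, and reduce everything to the classical fact that the Stiefel factor $\U$ of a left-orthogonally-invariant Gaussian matrix is Haar-distributed and independent of the Gram matrix $\X^\top\X$, so that both $\Xt$ and the conditioned Gaussian sample are pushforwards of the same Haar measure under $\V\mapsto\V\,\Sigmab^{\frac12\origtop}$. Your argument is more elementary and self-contained: it bypasses the Beta/Dirichlet machinery entirely and also avoids the slightly delicate step of reconstructing the joint law of the vectors from the joint law of their outer products up to signs. What the paper's route buys is an explicit constructive decomposition tied to named matrix-variate distributions, with the measure-zero conditioning absorbed into cited distributional identities; in your route that burden reappears as the disintegration step you correctly flag, which does need the Haar-averaging (or QR-Jacobian) argument to make the $O(k)$-invariance of the regular conditional distribution rigorous for almost every $\Sigmab$ simultaneously over all rotations. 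With that step filled in as you sketch, the proof is complete.
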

  Putting Theorem~\ref{t:wishart} together with the two lemmas,
  we observe that for any $k\geq d$, constructing $\Sigmab\sim
  W_d(k+2,\sigd)$, and plugging it into Lemma~\ref{l:beta}, we
  obtain that $\xbt_1,\dots \xbt_k\sim \vskx$,
  completing the proof of Theorem~\ref{t:gaussian}.
\end{proofof}
We conclude this section with the proof of Lemma~\ref{l:beta}, which
demonstrates an interesting application for classical results in
matrix variate statistics.
\begin{proofof}{Lemma}{\ref{l:beta}}
Let $\Sigmab_1\sim
  W_d(k_1,\sigd)$ and $\Sigmab_2\sim
  W_d(k_2,\sigd)$ be independent Wishart matrices (where
  $k_1+k_2\geq d$). Then matrix
  \[\U = (\Sigmab_1\!+\!\Sigmab_2)^{-\frac12}\Sigmab_1\big((\Sigmab_1\!+\!\Sigmab_2)^{-\frac12}\big)^\top\]
  is matrix variate beta distributed, written as $\U\sim
  B_d(k_1,k_2)$. The following was shown by 
  \cite{matrix-variate-beta}:
  \begin{lemma}[\cite{matrix-variate-beta}, Lemma 3.5]\label{l:mvb} 
    If $\Sigmab\sim W_d(k,\sigd)$ is distributed independently of
    $\U\sim B_d(k_1,k_2)$, and if $k=k_1+k_2$, then
    \begin{align*}
      \B &= \Sigmab^{\frac12}\U\big(\Sigmab^{\frac12}\big)^\top
	\quad \text{and} \quad
      \C = \Sigmab^{\frac12}(\I-\U)\big(\Sigmab^{\frac12}\big)^\top
    \end{align*}
    are independently distributed and $\B\sim W_d(k_1,\sigd)$,
    $\C\sim W_d(k_2,\sigd)$.
  \end{lemma}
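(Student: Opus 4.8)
The plan is to prove this by an explicit change of variables at the density level, working first in the nonsingular regime $k_1,k_2\geq d$ (where $\U$ and both target Wisharts have densities) and extending to the general case at the end. Since $\Sigmab\sim W_d(k,\sigd)$ and $\U\sim B_d(k_1,k_2)$ are independent, their joint density is the product
\begin{align*}
f(\Sigmab,\U) &\propto \det(\Sigmab)^{(k-d-1)/2}e^{-\frac12\tr(\sigd^{-1}\Sigmab)}\\
&\quad\times\det(\U)^{(k_1-d-1)/2}\det(\I-\U)^{(k_2-d-1)/2},
\end{align*}
supported on positive-definite $\Sigmab$ and symmetric $\U$ with $\zero\prec\U\prec\I$. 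First I would record that $(\Sigmab,\U)\mapsto(\B,\C)$ is a bijection onto pairs of positive-definite matrices: since $\B+\C=\Sigmab^{\frac12}(\I)(\Sigmab^{\frac12})^\top=\Sigmab$, the inverse is $\Sigmab=\B+\C$ and $\U=(\B+\C)^{-\frac12}\B\big((\B+\C)^{-\frac12}\big)^\top$. The goal is to show that $f$, transported through this map, factors as a $W_d(k_1,\sigd)$ density in $\B$ times a $W_d(k_2,\sigd)$ density in $\C$.

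Next comes the determinant/trace bookkeeping. Using $\det\!\big(\Sigmab^{\frac12}\M(\Sigmab^{\frac12})^\top\big)=\det(\Sigmab)\det(\M)$, I substitute $\det(\U)=\det(\B)/\det(\B+\C)$, $\det(\I-\U)=\det(\C)/\det(\B+\C)$, $\det(\Sigmab)=\det(\B+\C)$, and $\tr(\sigd^{-1}\Sigmab)=\tr(\sigd^{-1}\B)+\tr(\sigd^{-1}\C)$. With $k=k_1+k_2$, the powers of $\det(\B+\C)$ collect to $\tfrac{k-d-1}{2}-\tfrac{k_1-d-1}{2}-\tfrac{k_2-d-1}{2}=\tfrac{d+1}{2}$, so the transported integrand becomes, up to the overall constant and the Jacobian,
\begin{align*}
&\det(\B+\C)^{\frac{d+1}{2}}\,\det(\B)^{(k_1-d-1)/2}e^{-\frac12\tr(\sigd^{-1}\B)}\\
&\qquad\times\det(\C)^{(k_2-d-1)/2}e^{-\frac12\tr(\sigd^{-1}\C)}.
\end{align*}
Everything except the $\det(\B+\C)^{(d+1)/2}$ prefactor has already split into the two desired Wishart shapes, so all that remains is to kill that prefactor with the Jacobian.

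This is the crux and the main obstacle. I would invoke the standard matrix-variate calculus identity for exactly this transformation: for nonsingular $\A$, the map $\M\mapsto\A\M\A^\top$ on symmetric matrices has Jacobian $\det(\A)^{d+1}$, and combining this with the triangular structure of $(\Sigmab,\U)\mapsto(\B,\C)$ gives $d\B\,d\C=\det(\Sigmab)^{(d+1)/2}\,d\U\,d\Sigmab=\det(\B+\C)^{(d+1)/2}\,d\U\,d\Sigmab$, equivalently $d\U\,d\Sigmab=\det(\B+\C)^{-(d+1)/2}\,d\B\,d\C$. (This is precisely the Jacobian underlying the constructive definition of $B_d(k_1,k_2)$ from independent Wisharts; it can be cited from Muirhead or Gupta--Nagar, or derived directly.) This factor cancels the leftover prefactor exactly, leaving the integrand equal to a product of the two unnormalized Wishart densities; since the result is a genuine probability density, the constant is forced to be the product of the two Wishart normalizers — equivalently the multivariate Beta--Gamma identity $\beta_d(\tfrac{k_1}{2},\tfrac{k_2}{2})=\Gamma_d(\tfrac{k_1}{2})\Gamma_d(\tfrac{k_2}{2})/\Gamma_d(\tfrac{k}{2})$ — so $\B$ and $\C$ are independent with $\B\sim W_d(k_1,\sigd)$ and $\C\sim W_d(k_2,\sigd)$. (Intuitively, this is just the forward Wishart-splitting construction read in reverse, but that reading itself rests on the same Jacobian.) Finally I would dispose of the degenerate regime $\min(k_1,k_2)<d$, where some matrices are singular and the density manipulation is unavailable, by a continuity/limiting argument or by matching Laplace transforms.
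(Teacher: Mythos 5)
The paper itself does not prove this lemma: it is imported verbatim from \cite{matrix-variate-beta}, so there is no internal proof to compare against. Judged on its own merits, your density-level argument is correct in the regime where it applies, and it is essentially the classical Olkin--Rubin/Muirhead change-of-variables proof: the bijection $(\Sigmab,\U)\leftrightarrow(\B,\C)$ via $\Sigmab=\B+\C$, the exponent bookkeeping leaving the factor $\det(\B+\C)^{(d+1)/2}$, and its exact cancellation against the Jacobian $d\U\,d\Sigmab=\det(\B+\C)^{-(d+1)/2}\,d\B\,d\C$ (obtained, as you say, by composing $\U\mapsto\B$ at fixed $\Sigmab$, with Jacobian $\det(\Sigmab)^{(d+1)/2}$, with the unimodular shear $(\Sigmab,\B)\mapsto(\B,\C)$) all check out, and the normalizing constant is indeed forced, recovering $\beta_d(\frac{k_1}{2},\frac{k_2}{2})=\Gamma_d(\frac{k_1}{2})\Gamma_d(\frac{k_2}{2})/\Gamma_d(\frac{k}{2})$.

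The genuine gap is the singular case $\min(k_1,k_2)<d$, which you defer in a single sentence --- and it is exactly the case this paper needs: in the proof of Lemma~\ref{l:beta} the splitting procedure invokes the present lemma with $k_1=1$ (i.e.\ $\U\sim B_d(1,k-1)$), so for $d>1$ neither $\U$ nor $\B$ has a Lebesgue density and your entire computation is vacuous for the application at hand. Moreover, neither of your proposed patches is routine. A limiting argument in the degrees of freedom is blocked by the Gindikin constraint: $W_d(\nu,\sigd)$ exists only for $\nu\in\{0,1,\dots,d-1\}\cup(d-1,\infty)$, so $k_1=1$ cannot be approached through parameter values admitting densities, and smoothing the matrices themselves (say, adding an independent small Wishart) destroys the hypothesis structure rather than preserving it. The Laplace-transform route requires the joint transform $\E[\exp(-\tr(\Theta_1\B)-\tr(\Theta_2\C))]$, which after conditioning on $\Sigmab$ involves the Laplace transform of the matrix-variate beta, a hypergeometric function of matrix argument; ``matching'' these is not a calculation one can gesture at. A completion that does work, and that your parenthetical remark almost reaches: since the paper \emph{defines} $B_d(k_1,k_2)$ constructively from independent $\Sigmab_1\sim W_d(k_1,\sigd)$, $\Sigmab_2\sim W_d(k_2,\sigd)$, the lemma reduces to the single claim that in this forward construction the sum $\Sigmab_1+\Sigmab_2$ is \emph{independent} of the ratio $\U$; granting that, the hypothesis pair $(\Sigmab,\U)$ has the same joint law as the forward pair (independent coordinates with identical marginals), whence $(\B,\C)$ is distributed as $(\Sigmab_1,\Sigmab_2)$ with no Jacobian at all. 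That sum--ratio independence in the singular regime is precisely the nontrivial content of \cite{matrix-variate-beta}, proved there via densities with respect to Hausdorff measure on the manifold of rank-deficient positive semidefinite matrices; citing it, or redoing that manifold-level computation, is what your closing sentence would need to become before the proof covers the case the paper actually uses.
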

  Now, suppose that we are given a matrix $\Sigmab\sim
  W_d(k,\sigd)$. We can decompose it into components
  of degree one via
  a splitting procedure described in \cite{matrix-variate-beta},
  namely taking $\U_1\sim B_d(1,k\!-\!1)$ and computing
  $\B_1\sim \Sigmab^{\frac12}\U_1\big(\Sigmab^{\frac12}\big)^\top$,
  $\C_1=\Sigmab\!-\!\Sigmab_1$ as in
  Lemma~\ref{l:mvb}, then recursively repeating the procedure on
  $\C_1$ (instead of $\Sigmab$) with $\U_2\sim B_d(1,k\!-\!2)$, \ldots,
  until we get $k$  Wishart matrices of degree one summing to $\Sigmab$:
  \begin{align*}
\B_1 &= \Sigmab^{\frac12}\U_1\big(\Sigmab^{\frac12}\big)^\top\\
   \B_2 &=
          \underbrace{\Sigmab^{\frac12}(1\!-\!\U_1)^{\frac12}}_{\C_1^{1/2}}\U_2\underbrace{\big((\I\!-\!\U_1)^{\frac12}\big)^\top\big(\Sigmab^{\frac12}\big)^\top}_{(\C_1^{1/2})^\top}\\[-3mm]
    \vdots&\\
    \B_k&=\underbrace{\Sigmab^{\frac12}(1\!-\!\U_{k-1})^{\frac12}\dots}_{\C_{k-1}^{1/2}}\U_k\underbrace{\dots \big((1\!-\!\U_{k-1})^{\frac12}\big)^\top\big(\Sigmab^{\frac12}\big)^\top\!\!}_{(\C_{k-1}^{1/2})^\top}.
  \end{align*}
  The above collection of matrices can be described more simply via
  the matrix variate Dirichlet distribution. Given independent
  matrices $\Sigmab_i\sim W_d(k_i,\sigd)$ for $i=1..s$, the matrix
  variate Dirichlet distribution $D_d(k_1,\dots,k_s)$ corresponds to a
  sequence of matrices
  \begin{align*}
    \V_i=\Sigmab^{-\frac12}\Sigmab_i\big(\Sigmab^{-\frac12}\big)^\top, \  i=1..s, \quad
    \Sigmab=\sum_{i=1}^s\Sigmab_i.
  \end{align*}
  Now,  Theorem 6.3.14 from \cite{matrix-variate-distributions} states
  that matrices $\B_i$ defined recursively as above can 
  also be written as
  \begin{align*}
    \B_i = \Sigmab^{\frac12}\V_i\big(\Sigmab^{\frac12}\big)^\top,\quad
    (\V_1,\dots,\V_k)\sim D_d(1,\dots,1).
  \end{align*}
  In particular, we can construct them as
  \begin{align*}
    \B_i = \xbt_i\xbt_i^\top = \Sigmab^{\frac12}(\X^\top\X)^{-\frac12}\x_i
    \x_i^\top\big((\X^\top\X)^{-\frac12}\big)^\top\big(\Sigmab^{\frac12}\big)^\top\!.
  \end{align*}
  Note that since matrix $\Sigmab$ is independent of vectors $\x_i$, we
  can condition on it without altering the distribution of the
  vectors. It remains to observe that the conditional distribution of matrix
  $\B_i$ determines the distribution of $\xbt_i$ up to multiplying
  by $\pm 1$, and since both $\xbt_i$ and $-\xbt_i$ are identically
  distributed, we recover the correct distribution of
  $\xbt_1,\dots,\xbt_k$ conditioned on $\Xt^\top\Xt=\Sigmab$, completing the proof.
\end{proofof}

\section{GENERAL ALGORITHM}
\label{s:algorithm}
In this section, we present a general algorithm for volume-rescaled
sampling, which uses approximate leverage score sampling to generate a
larger pool of points from which the smaller volume-rescaled sample can be drawn. 
The method relies on a technique called ``determinantal rejection sampling'', 
introduced recently in \cite{leveraged-volume-sampling} 
for a variant of volume sampling of finite subsets of points from a fixed set. 
Also, as in \cite{leveraged-volume-sampling} our
algorithm uses the most standard volume sampling distribution 
(see \eqref{eq:vol} and the associated discussion in the introduction) as a subroutine which samples
a subset of points/rows from a fixed set.
This is done via an efficient implementation of
``reverse iterative sampling'' \cite{unbiased-estimates-journal} 
(See Algorithm \ref{alg:standard} for a high-level description of this
sampling method). Curiously enough, the efficient implementation
of reverse iterative sampling given by
\cite{unbiased-estimates-journal}
(denoted here as ``VolSamp$(\{\x_1,\dots,\x_n\},k)$'' 
and not repeated here for lack of space)
is again based on rejection sampling:
It samples a set of $k$ points out of $n$ in time $O(nd^2)$
(independent of $k$). The runtime bound for this implementation only holds with
high probability because of its use of rejection sampling.

For our algorithm we assume that an estimate $\Sigmabh \approx\sigd$ 
of the covariance matrix is available, 
along with an upper-bound on the conditioning number.

\begin{algorithm}
  \caption{\bf \small 
    Determinantal rejection sampling\qquad\white{.}
\hspace{2.25cm}for arbitrary distributions $\dx$}
  \label{alg:det}
  \begin{algorithmic}[1]
    \STATE \textbf{Input:} $\Sigmabh, K, t$
    \STATE \textbf{repeat}
    \STATE \quad $k\rightarrow 0$
    \STATE \quad\textbf{while} $k<t$
    \STATE \quad\quad Sample $\x \sim \dx$
    \STATE \quad\quad $a\sim
    \text{Bernoulli}\Big(\min\big\{\,1,\ \frac{\x^\top\Sigmabh^{-1}\x}{K}\, \big\}\Big)$
    \STATE \quad\quad\textbf{if}
    $a\!=\!\text{true}$,\ \textbf{then}
    \STATE\quad\quad\quad $k\!\leftarrow\! k\!+\!1$
    \STATE \quad\quad\quad$\x_k\leftarrow \x$
    \STATE \quad\quad\quad$\xbt_k\leftarrow
    \frac{\sqrt{d}}{\sqrt{\x_k^\top\Sigmabh^{-1}\x_k}} \,\x_k$
    \vspace{-1mm}
    \STATE\quad\quad\textbf{end}
    \STATE \quad\textbf{end}
      \vspace{1mm} 
    \STATE \quad $\Sigmabt \leftarrow \frac1t\sum_{j=1}^t \xbt_j\xbt_j^\top$
    \STATE \quad Sample $\textit{Acc}\sim
    \text{Bernoulli}\big(\min\{\,1,\ \det(\Sigmabt\Sigmabh^{-1})\,\}\big)$
    \vspace{-3mm}
    \STATE \textbf{until} $\textit{Acc}=\text{true}$
      \vspace{1mm} 
    \STATE $\{\xbt_{i_1},\dots,\xbt_{i_d}\} \leftarrow$
    VolSamp$\big(\{\xbt_1,\dots,\xbt_t\},d\big)$
    \RETURN $\x_{i_1},\dots,\x_{i_d}$
 \end{algorithmic}
\end{algorithm}

Algorithm~\ref{alg:det} has one additional hyperparameter $t$, which controls the number of inner-loop iterations. Our analysis works for any $t>d^2$, although for simplicity we use $t = 2d^2$ in the main result.

\begin{algorithm}[H] 
  \caption{\bf \ \small Reverse iterative sampling \cite{unbiased-estimates-journal}}
  \label{alg:standard}
  \begin{algorithmic}[1]
    \STATE Input $\{\x_1,\dots,\x_n\}\subset \R^d$ and $k\geq d$
    \STATE \quad$S \leftarrow \{1..n\}$
\vspace{1mm}
    \STATE \quad{\bf while} $|S|>k$
    \STATE \quad\quad For each ${i\in S}$: $q_i\!\leftarrow\!
    \frac{\det(\sum_{j\in S\backslash i}\x_j\x_j^\top )}{(|S|-d)\det(\sum_{j\in S}\x_j\x_j^\top)}$
    \STATE \quad\quad Sample $i$ from  distribution $(q_i)_{i\in S}$
\vspace{1mm}
    \STATE \quad\quad $S\leftarrow S \backslash \{i\}$
    \STATE \quad{\bf end} 
    \RETURN $\{\x_i\}_{i\in S}$
  \end{algorithmic}
\end{algorithm}

Our analysis of
Algorithm~\ref{alg:det}
uses the following two lemmas, both of which are
extensions of
results from~\cite{leveraged-volume-sampling}.
\begin{lemma}\label{l:fast-rejection}
  For $\Sigmabh\succ 0$, let
  $l_{\Sigmabh}(\x)=\x^\top\Sigmabh^{-1}\x$. Define the following
  probability measure over $\R^d$:
  \begin{align*}
    \Lev_{\Sigmabh,{\cal X}}(A) \defeq \E_{\dx}\bigg[\,\one_A\ \frac{ l_{\Sigmabh}(\x)}
    {\tr(\sigd\Sigmabh^{-1})}\bigg].
  \end{align*}
If $\x_1,\dots,\x_t \overset{\textnormal{\fontsize{6}{6}\selectfont
    i.i.d.}}{\sim}\Lev_{\Sigmabh, {\cal X}}$, and
$\xbt_i=\frac{\sqrt{d}}{\sqrt{l_{\Sigmabh}(\x_i)}} \,\x_i$, then
\begin{align*}
  \det(\Sigmabt\Sigmabh^{-1})\leq 1,\quad\text{where}\quad
\Sigmabt=\frac1t\sum_{i=1}^t
  \xbt_i\xbt_i^\top,\\
  \text{and}\quad\E\big[\!\det(\Sigmabt\Sigmabh^{-1})\big]
  \geq
    \Big(1-\frac{d^2}{t}\Big)\frac{\det(\sigd\Sigmabh^{-1})}{(\frac1d\tr(\sigd\Sigmabh^{-1}))^d}.
\end{align*}
\end{lemma}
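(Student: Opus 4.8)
The plan is to prove the two claims separately, with the key enabling observation being that the normalization $\xbt_i=\frac{\sqrt d}{\sqrt{l_{\Sigmabh}(\x_i)}}\,\x_i$ forces $l_{\Sigmabh}(\xbt_i)=\xbt_i^\top\Sigmabh^{-1}\xbt_i=d$ \emph{deterministically}, regardless of where $\x_i$ lands. First I would use this to compute the trace exactly:
\[
\tr(\Sigmabt\Sigmabh^{-1})=\frac1t\sum_{i=1}^t \tr(\xbt_i\xbt_i^\top\Sigmabh^{-1})=\frac1t\sum_{i=1}^t \xbt_i^\top\Sigmabh^{-1}\xbt_i=d.
\]
Since $\Sigmabt\Sigmabh^{-1}$ is similar to the symmetric PSD matrix $\Sigmabh^{-1/2}\Sigmabt\Sigmabh^{-1/2}$, its eigenvalues $\lambda_1,\dots,\lambda_d\ge0$ are real, nonnegative, and sum to $d$. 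The upper bound $\det(\Sigmabt\Sigmabh^{-1})=\prod_i\lambda_i\le(\frac1d\sum_i\lambda_i)^d=1$ is then immediate from AM--GM, which disposes of the first claim.

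For the expectation lower bound, I would stack the vectors into the matrix $\Xt\in\R^{t\times d}$ with rows $\xbt_i^\top$, so that $\Sigmabt=\frac1t\Xt^\top\Xt$ and $\det(\Sigmabt)=t^{-d}\det(\Xt^\top\Xt)$. Applying the Cauchy--Binet formula together with exchangeability of the i.i.d.\ sample gives
\[
\E[\det(\Xt^\top\Xt)]=\!\!\sum_{T\subseteq[t]:|T|=d}\!\!\E[\det(\Xt_T^\top\Xt_T)]=\binom td\,\E\Big[\det\Big(\sum_{i=1}^d\xbt_i\xbt_i^\top\Big)\Big],
\]
reducing everything to the expected determinant of a size-$d$ sample. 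Here I would invoke Lemma~\ref{l:determinant} with $\a=\b=\xbt$ and $k=d$ to obtain $\E[\det(\sum_{i=1}^d\xbt_i\xbt_i^\top)]=d!\det(\E[\xbt\xbt^\top])$.

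The crux is then the single-vector second moment $\E[\xbt\xbt^\top]$, which I would evaluate by a change of measure: since $\xbt\xbt^\top=\frac{d}{l_{\Sigmabh}(\x)}\x\x^\top$ and the $\Lev_{\Sigmabh,{\cal X}}$ density reweights $\dx$ by $l_{\Sigmabh}(\x)/\tr(\sigd\Sigmabh^{-1})$, the two $l_{\Sigmabh}(\x)$ factors cancel, leaving
\[
\E[\xbt\xbt^\top]=\E_{\dx}\Big[\tfrac{d}{l_{\Sigmabh}(\x)}\x\x^\top\cdot\tfrac{l_{\Sigmabh}(\x)}{\tr(\sigd\Sigmabh^{-1})}\Big]=\frac{d}{\tr(\sigd\Sigmabh^{-1})}\,\sigd.
\]
Substituting back, multiplying by the constant $\det(\Sigmabh^{-1})$, and collecting the prefactor $t^{-d}\binom td\,d!=\prod_{j=0}^{d-1}(1-\frac jt)$ yields exactly $\big(\prod_{j=0}^{d-1}(1-\frac jt)\big)\frac{\det(\sigd\Sigmabh^{-1})}{(\frac1d\tr(\sigd\Sigmabh^{-1}))^d}$; a final application of the Weierstrass product inequality $\prod_{j=0}^{d-1}(1-\frac jt)\ge1-\binom d2\frac1t\ge1-\frac{d^2}t$ gives the stated bound.

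I expect the main obstacle to be bookkeeping rather than conceptual: verifying that the constant factors from Cauchy--Binet, the $d!$ from Lemma~\ref{l:determinant}, the $(d/\tr(\sigd\Sigmabh^{-1}))^d$ from the second-moment computation, and the $t^{-d}$ from the normalization all combine into precisely the claimed ratio, and confirming the lower bound on $\prod_{j=0}^{d-1}(1-j/t)$ (note $\binom d2/t\le d^2/t$, so the stated constant is in fact loose). The only subtlety worth double-checking is that Lemma~\ref{l:determinant} applies to the reweighted $\xbt$-distribution, which it does since the $\xbt_i$ are i.i.d.\ with a well-defined $\E[\xbt\xbt^\top]$.
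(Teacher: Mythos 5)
Your proposal is correct and follows essentially the same route as the paper's proof: the trace-equals-$d$ observation plus AM--GM for the determinant upper bound, and Lemma~\ref{l:determinant} applied to the reweighted vectors (whose second moment is $\frac{d}{\tr(\sigd\Sigmabh^{-1})}\sigd$) for the expectation lower bound, with the same prefactor $\prod_{j=0}^{d-1}(1-j/t)$. The only cosmetic differences are that the paper invokes Lemma~\ref{l:determinant} directly at $k=t$ rather than first reducing to $k=d$ via Cauchy--Binet, and bounds the product by Bernoulli's inequality $(1-d/t)^d\ge 1-d^2/t$ rather than the (slightly tighter) Weierstrass product inequality.
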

\begin{lemma}\label{l:composition}
Let $\x_1,\dots,\x_k\sim\vskx$ be a volume-rescaled sample,
and suppose that $\{\x_{i_1},\dots,\x_{i_d}\}$ is a subset
produced from it by standard volume sampling, i.e. by calling
$\textnormal{VolSamp}(\{\x_1,\dots,\x_k\},d)$. Then
    $\x_{i_1},\dots,\x_{i_d}\sim\vsdx$.
\end{lemma}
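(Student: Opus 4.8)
The plan is to compute directly the law of the output set $\{\x_{i_1},\dots,\x_{i_d}\}$ and verify it equals $\vsdx$. Since reverse iterative sampling returns an unordered set while $\vsdx$ is a distribution over sequences that is invariant under permuting its $d$ points, it suffices to match the two laws on symmetric (permutation-invariant) events $B\subseteq(\R^d)^d$; equivalently, one may append a uniform random permutation to the output. Fix such a $B$. The two ingredients I would use are: (i) the explicit form of standard volume sampling, namely that conditioned on the pool $\x_1,\dots,\x_k$ (the rows of $\X\in\R^{k\times d}$), $\textnormal{VolSamp}(\{\x_1,\dots,\x_k\},d)$ returns a given index set $T$ with $|T|=d$ with probability $\det(\X_T^\top\X_T)/\det(\X^\top\X)$, the normalization being the Cauchy--Binet identity $\sum_{T\subseteq[k]:\,|T|=d}\det(\X_T^\top\X_T)=\det(\X^\top\X)$ with $[k]=\{1..k\}$; and (ii) the definition of $\vskx$ from Definition~\ref{t:cauchy-binet}.

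First I would write, by the tower rule, the probability $P$ that the output lands in $B$ as an expectation over $\vskx$ of the conditional selection probability:
\begin{align*}
P \;=\; \E_{\vskx}\!\bigg[\sum_{T\subseteq[k]:\,|T|=d}\one_{\{(\x_i)_{i\in T}\in B\}}\,\frac{\det(\X_T^\top\X_T)}{\det(\X^\top\X)}\bigg].
\end{align*}
Substituting the definition of $\vskx$, the outer expectation becomes one over $\dxk$ weighted by the rescaling factor $\det(\X^\top\X)$ and divided by $d!{k\choose d}\det(\sigd)$. The crux of the argument is that this rescaling factor cancels exactly against the denominator $\det(\X^\top\X)$ in the selection probability:
\begin{align*}
P \;=\; \frac{1}{d!{k\choose d}\det(\sigd)}\,\E_{\dxk}\!\bigg[\sum_{T\subseteq[k]:\,|T|=d}\one_{\{(\x_i)_{i\in T}\in B\}}\,\det(\X_T^\top\X_T)\bigg].
\end{align*}
This cancellation is the whole point: the ``hard'' determinant factor produced by volume-rescaled sampling is precisely the quantity that normalizes the subsequent volume-sampling step.

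To finish, I would invoke exchangeability of the i.i.d.\ sample $\x_1,\dots,\x_k\sim\dxk$. For each fixed $T$ with $|T|=d$ the summand depends only on the points indexed by $T$, which are i.i.d.\ $\dx$, so every term has the common value $\E_{\dx^d}[\one_{\{(\x_1,\dots,\x_d)\in B\}}\det(\sum_{i=1}^d\x_i\x_i^\top)]$; there are ${k\choose d}$ such terms, cancelling the ${k\choose d}$ in the denominator. Using ${d\choose d}=1$, what remains is exactly $\E_{\dx^d}[\one_B\det(\sum_{i=1}^d\x_i\x_i^\top)]/(d!\det(\sigd))=\vsdx(B)$, as required.

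The only point requiring care, rather than a genuine obstacle, is the bookkeeping: matching the unordered VolSamp output with the exchangeable sequence distribution $\vsdx$ through symmetric events, and checking that the degenerate set $\{\det(\X^\top\X)=0\}$ is null under $\vskx$ (its measure carries the factor $\det(\X^\top\X)$), so that dividing by $\det(\X^\top\X)$ in the first display is legitimate almost surely. Everything else reduces to the two-line determinant cancellation followed by exchangeability.
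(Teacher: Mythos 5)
Your proof is correct, but it takes a different route from the paper's. The paper argues by induction along the reverse iterative sampling chain: using only the single-step removal probabilities $q_i(\X)=\frac{\det(\X_{-i}^\top\X_{-i})}{(|S|-d)\det(\X^\top\X)}$ from Algorithm~\ref{alg:standard} together with exchangeability, it shows that one removal step maps $\vskx$ to $\vskxm$, and iterates down to size $d$. You instead marginalize in one shot, taking as input the closed-form output law of $\textnormal{VolSamp}(\cdot,d)$, namely $P(T)=\det(\X_T^\top\X_T)/\det(\X^\top\X)$ with Cauchy--Binet as the normalizer, and then cancel $\det(\X^\top\X)$ against the rescaling factor in Definition~\ref{t:cauchy-binet} before applying exchangeability over the ${k\choose d}$ index sets. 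The underlying cancellation is the same in both arguments; the trade-off is that your version is shorter and makes the mechanism more transparent (the volume-rescaling weight is exactly the normalization constant of the subsequent discrete volume-sampling step), while the paper's version is more self-contained relative to the algorithm as stated, since it derives everything from the per-step transition probabilities rather than presupposing that reverse iterative sampling realizes the distribution in \eqref{eq:vol}. Your bookkeeping points (restricting to symmetric events because the output is an unordered set, and noting that $\{\det(\X^\top\X)=0\}$ is $\vskx$-null so the division is legitimate) are both correct and worth stating; the paper handles the former implicitly via the same symmetry observation.
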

We now show that Algorithm~\ref{alg:det} with $t=2d^2$ satisfies the
conditions of Theorem~\ref{t:det}. Our key
contribution compared to the analysis of
\cite{leveraged-volume-sampling} is the use of the Kantorovich inequality,
which allows us to significantly relax the 
$\epsilon$-approximation condition on $\Sigmabh$. 
\begin{proofof}{Theorem}{\ref{t:det}}
From the assumptions, we have
\[K\geq \frac{\kd}{1-\epsilon}\geq
    \max_{\xbt\in\text{supp}(\dx)}\xbt^\top\Sigmabh^{-1}\xbt,\]
so the sequence $\xbt_1,\dots,\xbt_{t}$ obtained by the algorithm
at the point of exiting the \textbf{while} loop is
distributed as in Lemma~\ref{l:fast-rejection}, and let $D_{\cal \widetilde{\!X}}$ be the
distribution of one such vector. The lemma ensures that
$\det\!\big(\Sigmabt\Sigmabh^{-1}\big)\leq 1$ is a valid Bernoulli
success probability so after exiting the \textbf{repeat}
loop,
$\xbt_1,\dots,\xbt_{t}$ is distributed so that the probability of any
event $A$ is proportional to
\begin{align*}
\E_{D_{\cal
  \widetilde{\!X}}^t}\!\bigg[\one_A\frac{\det(\Sigmabt)}{\det(\Sigmabh)}\bigg]
  \propto \E_{D_{\cal
  \widetilde{\!X}}^t}\!\bigg[\one_A
  \det\!\Big(\sum_{i=1}^t\xbt_i\xbt_i^\top\Big)\bigg]
  \propto\Vol_{D_{\cal  \widetilde{\!X}}}^t,
  \end{align*}
  i.e., volume-rescaled sampling from $D_{\cal \widetilde{\!X}}$.
Now Lemma \ref{l:composition} implies that
$\xbt_{i_1},\dots,\xbt_{i_d}\sim\Vol_{D_{\cal  \widetilde{\!X}}}^d$. In particular, it
means that the distribution of $\x_{i_1},\dots,\x_{i_d}$ is the
same for any choice of $t\geq d$. We use this observation to compute
the probability of an event $A$ w.r.t.~sampling of
$\x_{i_1},\dots,\x_{i_d}$ (up to constant factors) by setting $t=d$
(in the below, $\Sigmabt$ is treated as a function of
$\x_1,\dots,\x_d$): 
\begin{align*}
  \Pr(A) &\propto\E_{\dxk} \bigg[\,\one_A\,\det\!\big(\Sigmabt\big)
  \bigg(\prod_{i=1}^dl_{\Sigmabh}(\x_i)\bigg)\bigg]\\
  &\overset{(*)}{=} \E_{\dxk} \bigg[\,\one_A\,\frac{\det(\sum_i\x_i\x_i^\top)}{(\frac{d}{t})^d
    \prod_il_{\Sigmabh}(\x_i)}\bigg(\prod_{i=1}^dl_{\Sigmabh}(\x_i)\bigg)\bigg]\\
       &\propto \E_{\dxk}
         \bigg[\,\one_A\,\det\!\bigg(\sum_{i=1}^d\x_i\x_i^\top\bigg)\bigg]\\
  &\propto\vsdx(A),
\end{align*}
where $(*)$ uses the fact that for $t=d$, $\det(\Sigmabt)$ is the
squared volume of the parallelopiped spanned by $\x_1,\dots,\x_d$ and
stretched with the appropriate scaling factors.
Thus, we established the correctness of Algorithm~\ref{alg:det} for
any $t\geq d$, and we move on to complexity analysis. If we think of each iteration of the
\textbf{repeat} loop as a single Bernoulli trial, the success
probability $\Pr(\textit{Acc}\!=\!\text{true})$ equals
$\E[\det(\Sigmabt\Sigmabh^{-1})]$ with the expectation defined as in
Lemma~\ref{l:fast-rejection}. Let $\lambda_1,\dots,\lambda_d$ be the
eigenvalues of matrix $\Sigmabh \sigd^{-1}$. The approximation
guarantee for $\Sigmabh$ implies that all of these eigenvalues lie in
the range $[1\!-\!\epsilon,1\!+\!\epsilon]$. To lower-bound the
success probability, we use the Kantorovich arithmetic-harmonic mean
inequality. Letting $A(\cdot)$, $G(\cdot)$ and $H(\cdot)$ denote the
arithmetic, geometric and harmonic means respectively:
\begin{align*}
&\hspace{-5mm}\frac{\det(\sigd\Sigmabh^{-1})}{(\frac{1}{d}\tr(\sigd\Sigmabh^{-1}))^d}
  =\frac{\prod_{i=1}^d\frac{1}{\lambda_i}}{(\frac{1}{d}\sum_{i=1}^d\frac{1}{\lambda_i})^d}\\
  &=\bigg(\frac{H(\lambda_1,\dots,\lambda_d)}{G(\lambda_1,\dots,\lambda_d)}\bigg)^d
\;\overset{(1)}{\geq}\;\; \!\bigg(\frac{H(\lambda_1,\dots,\lambda_d)}{A(\lambda_1,\dots,\lambda_d)}\bigg)^d\\
  &\overset{(2)}{\geq}\!\big((1\!-\!\epsilon)(1\!+\!\epsilon)\big)^d
    \;\;\stackrel{\epsilon = 1/\sqrt{2d}} {=} \;\;\Big(1-\frac{1}{2d}\Big)^d
\!\geq \frac12,
\end{align*}
where $(1)$ is the geometric-arithmetic mean inequality and $(2)$ is
the Kantorovich inequality (\cite{kant}) with
$a\!=\!1\!-\!\epsilon$ and $b\!=\!1\!+\!\epsilon$:
$$\text{For }
0\!<\! a\!\le\! \lambda_1 , \!\mydots, \lambda_d \!\le\! b,\ \ 
\frac{A(\lambda_1,\!\mydots,\lambda_d)}{H(\lambda_1,\!\mydots,\lambda_d)}
\!\leq\! \bigg(\frac{A(a,b)} {G(a,b)}\bigg)^{\!2} \!\!.
$$
Now setting $t=2d^2$ in Lemma~\ref{l:fast-rejection}, we obtain that
\begin{align*}
\Pr(\textit{Acc}\!=\!\text{true}) =
  \E\big[\!\det(\Sigmabt\Sigmabh^{-1})\big]
  \geq \Big(1-\frac{d^2}{t}\Big)\,\frac12=\frac14.
\end{align*}
So a simple tail bound on a geometric random variable shows
that the number of iterations of \textbf{repeat} loop is $r\leq
\ln(\frac1\delta)/\ln(\frac34)$ w.p. at least $1-\delta$.
It remains to bound the number of samples needed from $\dx$. Note
that we can lower bound this success probability
\begin{align*}
  \Pr(a\!=\!\text{true})&=\E_{\dx}\bigg[\frac{\x^\top\Sigmabh^{-1}\x}{K}\bigg]
  = \frac{\tr(\sigd\Sigmabh^{-1})}{K}\\
  &\geq\frac{\tr(\sigd\sigd^{-1})}{(1+\epsilon) K}
    =\frac{d}{(1+\epsilon)K}. 
\end{align*}
Similarly as before we conclude that the number of samples needed for
a single iteration of \textbf{repeat} loop is $O(2d^2
\frac{K}{d}\ln(\frac1\delta))=O(Kd\ln(\frac1\delta))$ w.p. at least
$1-\delta$. Note that the computational cost per sample is $O(d^2)$
and the cost of VolSamp is $O(d^4)$, obtaining the desired complexities.
\end{proofof}
Finally, we discuss the time and sample complexity of obtaining 
$\Sigmabh$ with desired accuracy under the model where access to
$\dx$ is given only through sampling from the distribution. For this
we can rely on standard matrix Chernoff 
bounds given by \cite{matrix-tail-bounds}. The below version is
adapted from \cite{chen2017condition}: 
\begin{lemma}[\cite{matrix-tail-bounds,chen2017condition}]\label{l:matrix-tail} \ 
  If $\x_1,\x_2,\dots,\x_m \overset{\textnormal{\fontsize{6}{6}\selectfont
    i.i.d.}}{\sim}\dx$ and $m\geq C
\frac{\kd}{\epsilon^2}\ln(\frac{d}{\delta})$ for some absolute constant $C$, then
\begin{align*}
    (1-\epsilon)\sigd\preceq\frac1m\sum_{i=1}^m\x_i\x_i^\top\preceq
    (1+\epsilon)\sigd\quad\!\!\text{w.p.}\geq 1-\delta.
\end{align*}
\end{lemma}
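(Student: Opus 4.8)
The plan is to reduce the Loewner-order sandwich to a standard matrix Chernoff bound by whitening the sample. First I would introduce $\z_i \defeq \sigd^{-1/2}\x_i$, so that $\E[\z_i\z_i^\top] = \sigd^{-1/2}\,\E[\x_i\x_i^\top]\,\sigd^{-1/2} = \I$. Since congruence by an invertible matrix preserves the Loewner order, conjugating both inequalities by $\sigd^{-1/2}$ shows that the claim $(1-\epsilon)\sigd \preceq \frac1m\sum_i \x_i\x_i^\top \preceq (1+\epsilon)\sigd$ is equivalent to $(1-\epsilon)\I \preceq \frac1m\sum_i \z_i\z_i^\top \preceq (1+\epsilon)\I$, i.e.\ to the statement that every eigenvalue of $\frac1m\sum_i \z_i\z_i^\top$ lies in $[1-\epsilon, 1+\epsilon]$.

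The key quantitative input is a uniform bound on the summands. Each $\z_i\z_i^\top$ is a rank-one PSD matrix, and its spectral norm is $\|\z_i\|^2 = \z_i^\top\z_i = \x_i^\top\sigd^{-1}\x_i \leq \kd$ by the very definition of the conditioning number $\kd = \sup_{\xbt\in\text{supp}(\dx)} \xbt^\top\sigd^{-1}\xbt$. This places me exactly in the regime of the matrix Chernoff inequality of \cite{matrix-tail-bounds}: a sum of independent PSD matrices whose maximum eigenvalues are almost-surely bounded by $R = \kd$ and whose expectations sum to $m\I$, so that $\mu_{\min} = \mu_{\max} = m$.

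Applying the lower- and upper-tail forms of that inequality and using the standard simplification of the Chernoff exponent, I obtain $\Pr\big(\lambda_{\min}(\sum_i \z_i\z_i^\top) \leq (1-\epsilon)m\big) \leq d\,\exp(-c\,\epsilon^2 m/\kd)$ together with the analogous upper-tail estimate, for absolute constants. Requiring each of these two probabilities to be at most $\delta/2$ and taking a union bound forces $m \geq C\,\frac{\kd}{\epsilon^2}\ln(\frac{d}{\delta})$ for a suitable absolute constant $C$, which is precisely the stated hypothesis, and yields the two-sided bound with probability at least $1-\delta$.

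Since this is a specialization of a known matrix concentration result, there is no genuine obstacle. The only points requiring care are verifying the almost-sure norm bound through $\kd$ (which is exactly where the conditioning number enters, and is the reason $\kd$ rather than $d$ appears in the sample size) and tracking the constant $C$ so that both the lower and upper tails are controlled simultaneously by a single union bound. Everything else is a direct invocation of the cited inequality.
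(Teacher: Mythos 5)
Your proposal is correct, and it matches the standard argument: the paper itself offers no proof of this lemma, simply importing it from \cite{matrix-tail-bounds,chen2017condition}, and the derivation in those sources is exactly your whitening step $\z_i=\sigd^{-1/2}\x_i$ followed by the matrix Chernoff bound with the almost-sure norm bound $\|\z_i\|^2=\x_i^\top\sigd^{-1}\x_i\leq \kd$. The two points you flag as requiring care (the role of $\kd$ as the uniform bound $R$, and the union bound over the two tails absorbed into the constant $C$) are indeed the only substantive steps, so there is nothing to add.
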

Setting $\epsilon=\frac1{\sqrt{2d}}$ in
Lemma~\ref{l:matrix-tail}, we
note that the sample complexity of obtaining $\Sigmabh$ that would
  satisfy the assumptions of Theorem~\ref{t:det} is
  $m=O(\kd
  d\ln(\frac{d}{\delta}))$, and computing it takes $O(md^2) =
  O(\kd d^3\ln(\frac{d}{\delta}))$. 
\section{CONCLUSIONS}
We show that for the least squares estimator, the bias which occurs
in random design linear regression can be corrected by 
augmenting the dataset with dimension many points sampled from a
special joint distribution - an extension of discrete volume sampling. We
present two methods for performing this augmentation when the
underlying data distribution is only known through i.i.d.~samples. In
the process we improve the time complexity of a recently proposed
algorithm for discrete volume sampling.

An important future research direction is providing a random design error
analysis for the least squares estimator of the augmented
sample. Furthermore, it is natural to ask if  there are distribution families other than multivariate
normal which offer better complexity guarantees for producing
volume-rescaled samples.

\bibliographystyle{plain}
\bibliography{pap}

\clearpage
\appendix

\section{SAMPLE AUGMENTATION: PROOFS}
\label{a:unbiased}
In this section we give the proofs omitted in Section~\ref{s:unbiased}.

\begin{proofof}{Lemma}{\ref{l:determinant}}
First, suppose that $k=d$, in which case $\det(\A^\top\B) =
\det(\A)\det(\B)$. 
Recall that by  definition the determinant can be written as:
  \begin{align*}
\det(\C) = \sum_{\sigma\in \Sd}\sgn(\sigma)\prod_{i=1}^dc_{i,\sigma_i},
\end{align*}
where $\Sd$ is the set of all permutations of $(1..d)$, and
$\sgn(\sigma) = \sgn\big((1..d),\sigma\big)$ is the number of swaps
from $(1..d)$ to $\sigma$. Using this formula and denoting  $c_{ij} =
\big(\E[\a\b^\top]\big)_{ij}$, we can rewrite the expectation as: 
\begin{align*}
\E\big[\! \det(\A)\det(\B)\big]\!
&=\!\!\!\sum_{\sigma,\sigma'\in \Sd}\!\!\!\sgn(\sigma)\sgn(\sigma')
\prod_{i=1}^d \E\big[a_{i\sigma_i}b_{i\sigma'_i}\big]
\\
&= \sum_{\sigma\in \Sd}\sum_{\sigma'\in \Sd}\sgn(\sigma,\sigma') \prod_{i=1}^d c_{\sigma_i\sigma'_i}\\
  &=d! \sum_{\sigma'\in \Sd}\sgn(\sigma') \prod_{i=1}^d c_{i\sigma'_i}\\
  &=d!\det\!\big(\E[\a\b^\top]\big),
\end{align*}
which completes the proof for $k=d$. The case of
$k>d$ follows by induction via a standard determinantal formula:
\begin{align*}
\E\big[\det(\A^\top\B)\big] &\overset{(*)}{=}
  \E\bigg[\frac1{k-d}\sum_{i=1}^k\det\!\big(\A_{-i}^\top\B_{-i}\big)\bigg]\\
  &=\frac{k}{k-d}\,\E\big[\det\!\big(\A_{-k}^\top\B_{-k}\big)\big],
\end{align*}
where $(*)$ follows from the Cauchy-Binet formula and $\A_{-i}$
denotes matrix $\A$ with the $i$th row removed.
\end{proofof}
Next, we state a formula which we used in the proof
of Theorem~\ref{t:unbiased}. This lemma is an immediate implication of
a result shown by \cite{unbiased-estimates-journal}. 
\begin{lemma}\label{l:pseudoinverse}
  Given full rank $\X\in\R^{k\times d}$ and $\y\in\R^k$, we have:
  \begin{align*}
    \w^*(\X,\y) =
    \sum_{i=1}^k\frac{\det(\X_{-i}^\top\X_{-i})}{(k-d)\det(\X^\top\X)}\w^*(\X_{-i},\y_{-i}),
  \end{align*}
      where $\w^*(\X,\y) = \X^+\y$ is the least
      squares solution for $(\X,\y)$, and $\X^+$ is the pseudoinverse of $\X$.
    \end{lemma}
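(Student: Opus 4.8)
The plan is to verify the identity directly from the normal equations, rewriting each leave-one-out solution as a rank-one correction of the full solution and summing. Write $\M=(\X^\top\X)^{-1}$, let $h_i=\x_i^\top\M\x_i$ be the $i$th leverage score, and set $\w=\w^*(\X,\y)=\M\X^\top\y$ with fitted values $\yh_i=\x_i^\top\w$. Since $\X_{-i}^\top\X_{-i}=\X^\top\X-\x_i\x_i^\top$, the matrix determinant lemma gives $\det(\X_{-i}^\top\X_{-i})=(1-h_i)\det(\X^\top\X)$, so the claimed coefficient is exactly $\frac{1-h_i}{k-d}$. First I would record that these coefficients sum to one, since $\sum_{i=1}^k h_i=\tr(\M\,\X^\top\X)=\tr(\I)=d$, hence $\sum_{i=1}^k(1-h_i)=k-d$.

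Next I would obtain a closed form for each leave-one-out solution as a correction to $\w$. Applying the Sherman--Morrison formula to $(\X^\top\X-\x_i\x_i^\top)^{-1}$ and using $\X_{-i}^\top\y_{-i}=\X^\top\y-\x_i y_i$, a short computation (in which the cross terms combine through $\x_i^\top\M\X^\top\y=\yh_i$ and $\x_i^\top\M\x_i=h_i$) collapses to the standard leave-one-out update
\begin{align*}
\w^*(\X_{-i},\y_{-i}) = \w + \frac{\yh_i-y_i}{1-h_i}\,\M\x_i.
\end{align*}

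With this in hand, I would multiply by $(1-h_i)$ and sum over $i$. The constant term contributes $\big(\sum_i(1-h_i)\big)\w=(k-d)\w$, and the key cancellation is that the residual-weighted sum vanishes,
\begin{align*}
\sum_{i=1}^k (\yh_i-y_i)\,\M\x_i = \M\,\X^\top(\yhb-\y) = \zero,
\end{align*}
because $\X^\top\yhb=\X^\top\X\,\w=\X^\top\y$ by the normal equations; that is, the residual $\yhb-\y$ is orthogonal to the columns of $\X$. Dividing by $k-d$ and substituting $1-h_i=\det(\X_{-i}^\top\X_{-i})/\det(\X^\top\X)$ yields the stated formula.

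The only delicate point, and the one I expect to require care, is the full-rank requirement for the subproblems: the Sherman--Morrison step needs $h_i<1$. This is benign, however, since $h_i\le 1$ always (the $h_i$ are diagonal entries of the projection $\X\M\X^\top$), and at a boundary configuration $h_i=1$ one has $\yh_i=y_i$, so the indeterminate contribution $\frac{\yh_i-y_i}{1-h_i}\M\x_i$ is resolved by continuity (equivalently, the coefficient $\det(\X_{-i}^\top\X_{-i})=0$ annihilates that term). As the lemma is invoked only inside an expectation over a continuous distribution, such configurations occur with probability zero and may be disregarded.
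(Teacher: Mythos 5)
Your proof is correct, and it takes a genuinely different route from the paper's. The paper's own proof makes the same first move as you do---converting the determinant ratio into $1-h_i$ via Sylvester's theorem---but then reduces the remaining identity $\X^+=\sum_i\frac{1-h_i}{k-d}(\I_{-i}\X)^+$ to a formula proved in the cited prior work (Theorem 5 of the reference on unbiased estimates), so it is essentially a citation rather than a derivation. You instead give a self-contained elementary argument: the Sherman--Morrison leave-one-out update $\w^*(\X_{-i},\y_{-i})=\w+\frac{\yh_i-y_i}{1-h_i}\M\x_i$, the trace identity $\sum_i h_i=d$ to make the coefficients sum to one, and orthogonality of the residual to the column space to kill the correction terms. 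This buys a complete proof from first principles using only standard regression-diagnostics facts, at the cost of being slightly longer; the paper's version buys brevity at the cost of opacity. Your handling of the degenerate case $h_i=1$ is also sound---the cleanest justification is the one you give parenthetically: $\H_{ii}=1$ forces $\H_{ij}=0$ for $j\ne i$ (since $\H$ is an orthogonal projection), hence $\yh_i=y_i$, so the $i$th term contributes exactly zero on both sides and the two sums $\sum_i(1-h_i)=k-d$ and $\sum_i(\yh_i-y_i)\M\x_i=\zero$ are unaffected. This makes the identity hold deterministically for every full-rank $\X$, so the final appeal to ``probability zero under a continuous distribution'' is unnecessary (and would be slightly out of place, since the lemma is a deterministic statement).
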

    \begin{proof}
Let $\I_{-i}$ denote the identity matrix with $i$th diagonal entry set
to zero. Note that we can write
$\w^*(\X_{-i},\y_{-i})=(\I_{-i}\X)^+\y$. Moreover, by Sylvester's
theorem we have
\begin{align*}
  \frac{\det(\X_{-i}^\top\X_{-i})}{\det(\X^\top\X)} =
  1-\x_i^\top(\X^\top\X)^{-1}\x_i.
\end{align*}
Thus, it suffices to show that
\begin{align*}
  \X^+ = \sum_{i=1}^k\frac{1-\x_i^\top(\X^\top\X)^{-1}\x_i}{k-d}(\I_{-i}\X)^+,
\end{align*}
which is in fact precisely the formula shown in
\cite{unbiased-estimates-journal} (see proof of Theorem 5).
\end{proof}

\section{VOLUME-RESCALED GAUSSIAN: PROOFS}\label{a:gaussian}
In this section we give the proofs omitted in Section~\ref{s:gaussian}.

\begin{proofof}{Lemma}{\ref{l:conditional}}
  Since we are conditioning on an event which may have probability
  $0$, this requires a careful limiting argument. Let $A$ be any
  measurable event over the random  matrix $\Xt$ and let 
  \begin{align*}
    C_\Sigmab^\epsilon \defeq \big\{\B\in\R^{d\times d}\,:\, \|\B-\Sigmab\|\leq \epsilon\big\}
  \end{align*}
  be an $\epsilon$-neighborhood of $\Sigmab$ w.r.t.~the matrix $2$-norm.
  We write the conditional probability of $\Xt\in A$ given that
  $\Xt^\top\Xt\in C_\Sigmab^\epsilon$ as:
  \begin{align*}
    \Pr\big(\Xt\!\in\! A\,|&\,\Xt^\top\Xt\!\in\! C_\Sigmab^\epsilon\big) =
    \frac{\Pr\big(\Xt\!\in\! A\,\wedge\,\Xt^\top\Xt\!\in\! C_\Sigmab^\epsilon\big)}
{\Pr\big(\Xt^\top\Xt\!\in\! C_\Sigmab^\epsilon\big)}\\
    &=\frac{\E\big[\one_{[\X\in A]}\one_{[\X^\top\X\in
      C_\Sigmab^\epsilon]}\det(\X^\top\X)\big]}
      {\E\big[\one_{[\X^\top\X\in
      C_\Sigmab^\epsilon]}\det(\X^\top\X)\big]}\\
&\leq      \frac{\E\big[\one_{[\X\in E]}\one_{[\X^\top\X\in
      C_\Sigmab^\epsilon]}\det(\Sigmab)(1+\epsilon)^d\big]}
      {\E\big[\one_{[\X^\top\X\in
                                              C_\Sigmab^\epsilon]}\det(\Sigmab)(1-\epsilon)^d\big]}\\
 &=\frac{\E\big[\one_{[\X\in A]}\one_{[\X^\top\X\in
      C_\Sigmab^\epsilon]}\big]}{\E\big[\one_{[\X^\top\X\in
   C_\Sigmab^\epsilon]}\big]} \bigg(\frac{1+\epsilon}{1-\epsilon}\bigg)^d\\
&=\Pr\big(\X\!\in\! A\,|\,\X^\top\X\!\in\! C_\Sigmab^\epsilon\big)
                                                                         \bigg(\frac{1+\epsilon}{1-\epsilon}\bigg)^d\\
                                                                   &\overset{\epsilon\rightarrow 0}{\longrightarrow}
                                                                     \Pr\big(\X\!\in\! A\,|\,\X^\top\X\!=\!\Sigmab\big).
  \end{align*}
  We can obtain a lower-bound analogous to the above upper-bound,
  namely
$\Pr\big(\X\!\in\! A\,|\,\X^\top\X\!\in\! C_\Sigmab^\epsilon\big)
    \big(\frac{1-\epsilon}{1+\epsilon}\big)^d$, which also converges
    to $\Pr\big(\X\!\in\! A\,|\,\X^\top\X\!=\!\Sigmab\big)$.
  Thus, we conclude that:
  \begin{align*}
    \Pr\big(\Xt\!\in\! A\,|\,\Xt^\top\Xt\!=\!\Sigmab\big) &=
    \lim_{\epsilon\rightarrow 0}\,\Pr\big(\Xt\!\in\!
    A\,|\,\Xt^\top\Xt\!\in\! C_\Sigmab^\epsilon\big)\\
    &=\Pr\big(\X\!\in\! A\,|\,\X^\top\X\!=\!\Sigmab\big),
  \end{align*}
  completing the proof.
\end{proofof}

\section{GENERAL ALGORITHM: PROOFS}\label{a:algorithm}
In this section we give proofs omitted in Section~\ref{s:algorithm}.
\begin{proofof}{Lemma}{\ref{l:fast-rejection}}
  The distribution $\Lev_{\Sigmabh,{\cal X}}$ integrates to one
  because for $\x\sim \dx$:
  \begin{align*}
    \E\big[\x^\top\Sigmabh^{-1}\x\big]
    = \E\Big[\tr\big(\x\x^\top\Sigmabh^{-1}\big)\Big]
    = \tr\big(\sigd\Sigmabh^{-1}\big).
  \end{align*}
Next, we use the geometric-arithmetic mean
inequality for the eigenvalues of matrix $\Sigmabt$ to show that:
\begin{align*}
  \det\!\big(\Sigmabt\Sigmabh^{-1}\big)
  &\leq\Big(\frac{1}{d}\tr\big(\Sigmabt\Sigmabh^{-1}\big)\Big)^{\!d}\\
  &=\Big(\frac{1}{d\,t}
    \sum_{i=1}^t\frac{d}{l_{\Sigmabh}(\x_i)}\x_i^\top\Sigmabh^{-1}\x_i\Big)^d=1.
\end{align*}
Next, we use the formula for the normalization constant in Theorem~\ref{t:cauchy-binet} but with a modified random vector. Specifically, let
$\xbt_i=\frac{\tr(\sigd\Sigmabh^{-1})}{l_{\Sigmabh}(\x_i)}\x_i$. Then
$\E[\xbt_i\xbt_i^\top]=\sigd$ and
\begin{align*}
\Sigmabt =
  \frac{1}{t}\sum_{i=1}^t\frac{d}{l_{\Sigmabh}(\x_i)}\x_i\x_i^\top =
  \frac{d}{\tr(\sigd\Sigmabh^{-1})}\frac{1}{t}\sum_{i=1}^t\xbt_i\xbt_i^\top.
\end{align*}
So, using Lemma~\ref{l:determinant} on the vectors $\xbt_i$, we have:
\begin{align*}
  \E\big[\!\det(\Sigmabt\Sigmabh^{-1})\big]
  &=\bigg(\frac{d}{\tr(\sigd\Sigmabh^{-1})}\bigg)^{\!d}\,
    \frac{\E[\det(\sum_{i}\xbt_i\xbt_i^\top)]}{t^d\det(\Sigmabh)}\\
  &= \frac{d!{t\choose
  d}\det\!\big(\E[\xbt_1\xbt_1^\top]\big)}
    {t^d (\frac{1}{d}\tr(\sigd\Sigmabh^{-1}))^d\det(\Sigmabh)}\\
  &=\bigg(\prod_{i=0}^{d-1}\frac{t-i}{t}\bigg)
    \frac{\det(\sigd\Sigmabh^{-1})}
    {(\frac{1}{d}\tr(\sigd\Sigmabh^{-1}))^d}\\
  &\geq \bigg(1-\frac{d}{t}\bigg)^d \frac{\det(\sigd\Sigmabh^{-1})}
    {(\frac{1}{d}\tr(\sigd\Sigmabh^{-1}))^d}.
\end{align*}
Applying Bernoulli's inequality concludes the proof.
\end{proofof}

\begin{proofof}{Lemma}{\ref{l:composition}}
Let $\X\in\R^{k\times d}$ be the matrix with rows $\x_i^\top$ and let
$q_i(\X)$ denote the sampling probability in line 4 of Algorithm~\ref{alg:standard},
given the set of row vectors. We will show that if
$\x_1,\dots,\x_k\sim\vskx$, then after one step of the algorithm, the
remaining vectors are distributed according to
$\vskxm$. Let $A$ denote a measurable event over the space
$(\R^d)^{k-1}$, and let $A' = A\times \R^d$ be that event marginalized
over the space $(\R^d)^k$. We wish to compute the probability $\Pr(A)$
over the sample returned by the algorithm given input set
$\{\x_1,\dots,\x_k\}$ and sampling size $k-1$. Note that since the sample
$\x_1,\dots,\x_k$ is symmetric under permutations, the probability of
$A$ should not depend on which index $i$ is selected in line 5 of
Algorithm~\ref{alg:standard}, so we have
\begin{align*}
\Pr(A) &= k\ \Pr(A\ |\  \text{Alg. \ref{alg:standard} selected }i\!=\!k)\\
&\propto \E_{\dxk}\bigg[\,\one_{A'}\,q_k(\X)
  \det\!\big(\X^\top\X\big) \bigg]\\
         &\propto\E_{\dxk}\bigg[\,\one_{A'}\,\frac{\det(\X_{-k}^\top\X_{-k})}{\det(\X^\top\X)}
           \det\!\big(\X^\top\X\big)\bigg]\\
  &=\E_{\dxk}\bigg[\,\one_{A'}\,\det(\X_{-k}^\top\X_{-k})\bigg]\\
  &\propto
    \vskxm(A),
\end{align*}
where in the above we skipped constant factors, since
they fall into the normalization constant. The lemma now follows by
induction over increasing $k$.
\end{proofof}

\end{document}